\documentclass{article}
\PassOptionsToPackage{square,numbers,sort&compress}{natbib}
\usepackage[utf8]{inputenc} 
\usepackage[T1]{fontenc}    
\usepackage{lmodern} 

\usepackage{booktabs}       
\usepackage{amsfonts}       
\usepackage{nicefrac}       
\usepackage{microtype}      
\usepackage{xcolor}         
\usepackage{xspace}
\usepackage{graphicx}
\usepackage{wrapfig}

\usepackage{amsmath, amsthm, amssymb}
\usepackage{algorithm,algorithmicx}
\usepackage{algpseudocode}
\usepackage{multirow} 
\usepackage{subcaption} 
\usepackage{caption}
\usepackage{float}  
\usepackage{algorithm}
\usepackage{algpseudocode}

\usepackage{tabularx}
\usepackage[compact]{titlesec}
\usepackage[most]{tcolorbox}
\usepackage{hyperref}

\theoremstyle{plain}
\newtheorem{proposition}{Proposition}

\newtheorem{innercustomprop}{Proposition}
\newenvironment{customprop}[1]
  {\renewcommand\theinnercustomprop{#1}\innercustomprop}
  {\endinnercustomprop}

\newtheorem*{theorem*}{Theorem}
\newtheorem*{proposition*}{Proposition}
\newtheorem*{lemma*}{Lemma}
\newtheorem*{property*}{Property}
\newtheorem*{definition*}{Definition}
\newtheorem*{corollary*}{Corollary}

\theoremstyle{definition}

\newcommand{\methodA}{Loop-aware Activation Scaling~}
\newcommand{\methodB}{Selective Loop-aware Transformation\xspace}
\newcommand{\methodC}{Cross-loop Transition Adapter\xspace}
\newcommand{\methodD}{Trajectory-Aware Calibration\xspace}

\newcommand{\methodAabbr}{LAS\xspace}
\newcommand{\methodBabbr}{SLT\xspace}
\newcommand{\methodCabbr}{CTA\xspace}

\newcommand{\vparagraph}[1]{
  {\noindent \textbf{#1}}
}

\newtcolorbox{takeaway}{
  colback=gray!10,
  colframe=gray!50,
  boxrule=0.5pt,
  arc=3pt,
  left=6pt,
  right=6pt,
  top=6pt,
  bottom=6pt
}

\usepackage[preprint]{lib/neurips_2026}

\hypersetup{
  colorlinks=true,
  linkcolor=blue,
  citecolor=brown,
  urlcolor=magenta,
  breaklinks=true
}

\title{LoopQ: Quantization for Recursive Transformers}

\author{Rui Fang \\
National Taiwan University \\
\texttt{rfang@arbor.ee.ntu.edu.tw} \\
\And
Hsi-Wen Chen \\
National Taiwan University \\
\texttt{hwchen@arbor.ee.ntu.edu.tw} \\
\And
Ming-Syan Chen \\
National Taiwan University \\
\texttt{mschen@ntu.edu.tw} \\
}

\begin{document}

\maketitle
\begin{abstract}
Looped language models (LoopLMs) improve parameter efficiency by recursively reusing Transformer blocks, enabling deeper computation under a fixed model size. However, this reuse makes LoopLMs more fragile under post-training quantization (PTQ). We present the first systematic study of quantization in LoopLMs and identify three challenges: \textbf{distribution shift across roles}, \textbf{state reuse across loop transitions}, and \textbf{recursive error accumulation}. To address these challenges, we propose \textbf{LoopQ}, a loop-aware PTQ framework that preserves a shared quantized backbone while introducing lightweight adaptations. LoopQ combines activation scaling, selective transformation, cross-loop state alignment, and trajectory-aware optimization to reduce distributional mismatch within loops and error accumulation across loops. Experiments across seven benchmarks show that, under W4A4 quantization, LoopQ improves average downstream accuracy by 68.8\% and reduces average perplexity by 87.7\% compared with the strongest static PTQ baseline. 
\end{abstract}

\section{Introduction}
\label{sec:introduction}

Quantization robustness~\cite{alizadeh2020gradient,chmiel2020robust,xiao2023robustmq} is critical for efficient LLM deployment on resource-constrained devices~\cite{yu2024edge}. Looped language models (LoopLMs)~\cite{dehghani2018universal,lan2019albert,zhu2025scaling,jeddi2026loopformer,bae2025mixture,fan2024looped} improve expressiveness under a fixed parameter budget by \textbf{recursively reusing shared Transformer blocks}, making them attractive for memory- and bandwidth-constrained settings~\cite{zhou2022transpim,laguna2022hardware,kwon2024lol}. However, existing LoopLMs mainly target full-precision inference, while most post-training quantization (PTQ) methods optimize layer-wise quantization error~\cite{xiao2023smoothquant,ashkboos2024quarot,lin2024awq,kim2026turboboa,hu2025ostquant,sun2025flatquant,sanjeet2026mixquant} without accounting for error propagation through recursively reused states.

This limitation is particularly problematic for LoopLMs, which are substantially more sensitive to quantization than comparable non-loop architectures~\cite{zeitoun2026hyperloop}. In our preliminary experiments, directly applying PTQ to LoopLMs leads to performance drops exceeding 50\%. Under the same parameter budget, quantized non-loop LLMs can even outperform higher-precision LoopLMs. This suggests that the benefits of parameter reuse in full-precision settings do not automatically transfer to quantized regimes, where recursive reuse can amplify quantization-induced errors.

\begin{figure}[t]
    \centering
    \includegraphics[width=\textwidth]{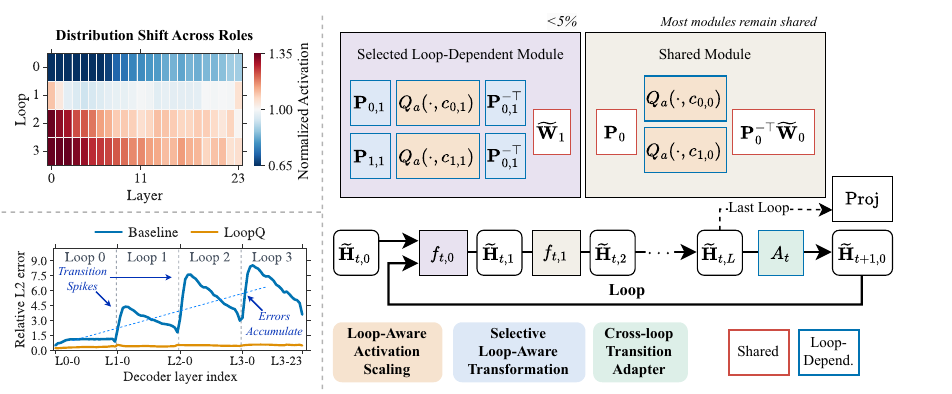}
    \caption{
    \textbf{Left:} LoopLMs exhibit loop-dependent activation drift (Challenge 1), error spikes at loop transitions (Challenge 2), and error accumulation across loops (Challenge 3).
    \textbf{Right:} LoopQ preserves a shared quantized backbone while adding lightweight loop-aware scaling, selective transformation, and cross-loop state alignment to reduce recursive quantization errors.
    }
\label{fig:fig1}
\end{figure}

Figure~\ref{fig:fig1} illustrates this vulnerability through two coupled phenomena induced by quantizing recursive computation. First, the top-left panel shows that the same physical layer exhibits different normalized P99 activation ranges across loop passes, indicating that a single static quantization configuration cannot accommodate multiple loop-dependent regimes. Second, the bottom-left panel shows that quantization errors do not remain local. Instead, they grow as recursion proceeds and exhibit pronounced spikes at loop transitions, where the hidden state produced by one loop is passed to the next. These observations suggest that quantization in LoopLMs is governed by both per-layer distributional mismatch and recursive error accumulation.

The root cause is that the same shared block is reused under different computational roles as recursion proceeds. A single physical layer therefore no longer behaves as a fixed feedforward position, but instead operates under stage-dependent roles across loops. This role variation leads to three key challenges, which we formalize through theoretical analysis: \textbf{(1) Distribution Shift across Roles}, where loop-dependent activation statistics cause scale and geometry mismatches under shared quantization; \textbf{(2) State Reuse across Transitions}, where the hidden state produced by one loop is reused as the input to the next, introducing transition inconsistency; and \textbf{(3) Recursive Error Accumulation}, where quantization errors propagate along the recurrent trajectory and are amplified by subsequent loop-dependent transformations.

A naive solution is to assign loop-specific quantization configurations to all loops, thereby fully adapting to loop-dependent distributions. However, this breaks the shared-parameter structure of recursive computation and introduces substantial memory and deployment overhead, undermining the efficiency advantages of LoopLMs. We therefore propose \textbf{LoopQ}, a loop-aware post-training quantization framework that preserves a shared quantized backbone while introducing lightweight loop-dependent adaptations that affect less than 5\% of transformation parameters, corresponding to less than 0.12\% of total model parameters.

As shown in the right panel of Figure~\ref{fig:fig1}, \textbf{\methodA} (brown block) uses loop-dependent scales to track magnitude drift, while \textbf{\methodB} (blue block) selectively introduces loop-dependent transforms for sharing-sensitive modules identified by \textit{sharing-gap analysis}, correcting residual geometry mismatch. To stabilize loop transitions, \textbf{\methodC} (green block) aligns cross-loop hidden states with a lightweight adapter. Finally, guided by fixed-point convergence~\cite{jeddi2026loopformer}, \textbf{\methodD} performs progressive trajectory-aware calibration under the true recurrent dynamics to suppress trajectory-level error accumulation.

\textbf{Our contributions are summarized as follows.} \textbf{(i)} We provide the first systematic study of quantization in LoopLMs, identifying three foundational challenges in recursive computation: distribution shift across roles, state reuse across transitions, and recursive error accumulation. \textbf{(ii)} We propose LoopQ, a loop-aware post-training quantization framework that preserves a shared quantized backbone while introducing lightweight loop-dependent adaptations, including activation scaling, selective transformation, cross-loop state alignment, and trajectory-aware optimization. \textbf{(iii)} Experiments on seven benchmarks show that LoopQ outperforms five PTQ baselines, improving mean downstream accuracy by 68.8\% and reducing average perplexity by 87.7\% under W4A4 quantization.
\section{Problem Formulation}
\label{sec:problem}

We first introduce the LoopLM architecture, followed by the post-training quantization setting.

\vparagraph{LoopLM.}
LoopLM is a recursive Transformer composed of a shared stack of $L$ layers, applied iteratively for $T$ loops. Let $\mathbf{X}$ denote the input, and let $f_\ell(\cdot;\theta_\ell)$ represent the $\ell$-th layer. We denote the hidden state at layer $\ell$ in loop $t$ as $\mathbf{H}_{t,\ell} \in \mathbb{R}^{n \times d}$, where each row corresponds to a token representation. The initial state is $\mathbf{H}_{0,0} = \mathrm{Embed}(\mathbf{X})$.

The loop computation is defined as
\begingroup\small
\begin{equation}
\label{eq:fp_recurrent}
\mathbf{H}_{t,\ell}=f_\ell(\mathbf{H}_{t,\ell-1};\theta_\ell), \qquad
\mathbf{H}_{t+1,0}=\mathbf{H}_{t,L},
\end{equation}
\endgroup
for $t \in \{0,\ldots,T-1\}$ and $\ell \in \{1,\ldots,L\}$. 
Let $\mathbf{H}_t := \mathbf{H}_{t,0}$. 
Each loop can be summarized by $F := f_L \circ \cdots \circ f_1$, yielding $\mathbf{H}_{t+1} = F(\mathbf{H}_t)$, and the final output is $\mathbf{Z} = \mathrm{Proj}(\mathbf{H}_T)$.

For analytical clarity, we express $f_\ell(\cdot;\theta_\ell)$ using a single weight matrix $\mathbf{W}_\ell$, i.e., $\mathbf{H}_{t,\ell} = \mathbf{H}_{t,\ell-1}\mathbf{W}_\ell^\top$. This abstraction naturally extends to general Transformer blocks with multiple weight matrices.

\vparagraph{Post-training Quantization on LoopLMs.}
We consider a standard post-training quantization (PTQ) setting, where each layer $f_\ell$ is approximated by a quantized counterpart $\widetilde{f}_\ell$ with quantized weights $\hat{\theta}_\ell$. Since model weights and activations often exhibit imbalanced distributions, such as heavy tails or channel-wise variance disparity, direct quantization is suboptimal because quantization levels are inefficiently allocated over the dynamic range.

To address this, we employ an invertible transformation $\mathbf{P}_\ell$ to reshape the representation while preserving the original function. For each layer $\ell$, the full-precision mapping can be equivalently written as
\begingroup\small
\begin{equation}
\mathbf{H}_{t,\ell}
=
(\mathbf{H}_{t,\ell-1}\mathbf{P}_\ell)
(\mathbf{W}_\ell\mathbf{P}_\ell^{-\top})^\top .
\end{equation}
\endgroup

Accordingly, quantized inference is performed as
\begingroup\small
\begin{equation}
\label{eq:quant}
\widetilde{\mathbf{H}}_{t,\ell}
=
Q_a(\widetilde{\mathbf{H}}_{t,\ell-1}\mathbf{P}_\ell; c_\ell)\,
Q_w(\mathbf{W}_\ell\mathbf{P}_\ell^{-\top})^\top,
\qquad
\widetilde{\mathbf{H}}_{t+1,0}=\widetilde{\mathbf{H}}_{t,L}.
\end{equation}
\endgroup

Here, $Q_w$ and $Q_a$ denote weight and activation quantization, respectively. For an activation element $x$ at layer $\ell$, symmetric uniform quantization is defined as
\begingroup\small
\begin{equation}
\label{eq:quant_a}
Q_a(x; c_\ell) = c_\ell \cdot \mathrm{clip}\!\left(\left\lfloor x / c_\ell \right\rceil, q_{\min}, q_{\max}\right),
\end{equation}
\endgroup
where $c_\ell$ determines the spacing between quantization levels, and $q_{\min}$ and $q_{\max}$ denote the minimum and maximum representable integer levels, respectively.

Compared to conventional quantization, quantizing LoopLMs introduces two distinct sources of mismatch: (i) loop-wise variation in activation distributions within each shared layer, where the quantization parameters $(\mathbf{P}_\ell, c_\ell)$ are fixed across loops while activations vary across loops; and (ii) recursive evolution of hidden states, where $\mathbf{H}_{t,\ell}$ changes across loops, leading to compounded distribution shifts that cannot be captured by a single static quantization configuration.
\section{Where Quantization Error Arises in LoopLMs}
\label{sec:error}

We analyze how the mismatch between static quantization and dynamic hidden-state evolution in LoopLMs leads to two coupled effects. All proofs are provided in Appendix~\ref{apx:proof}.

\vparagraph{Distribution Shift within Loops.}
A key challenge arises because the same quantized layer $\widetilde{f}_{\ell}$ must process loop-dependent hidden states generated by recursive computation.

This mismatch manifests in two forms. First, a shared activation scale $c_\ell$ cannot fit all loops when activation magnitudes vary: a large $c_\ell$ reduces resolution for small activations, while a small $c_\ell$ causes clipping for large activations. Second, a shared transformation $\mathbf{P}_\ell$ can only align activation geometry over a limited portion of the loop trajectory. As hidden states $\mathbf{H}_{t,\ell}$ evolve, their dominant directions and channel-wise energy distributions shift, producing loop-dependent outlier patterns. Thus, a fixed $\mathbf{P}_\ell$ may suppress quantization-sensitive directions in some loops but amplify them in others, misaligning quantization noise with critical features.

\begin{proposition}
\label{prop:inter_layer}
Define the transformed activations at layer $\ell$ as $\hat{\mathbf{H}}_{t,\ell-1} := \mathbf{H}_{t,\ell-1}\mathbf{P}_\ell$. 
Assume that for each loop $t$, $\hat{\mathbf{H}}_{t,\ell-1} \sim s_t \mathbf{R}_t$, where $s_t > 0$ is a loop-dependent scale, and $\mathbf{R}_t$ is a zero-mean random matrix with covariance $\boldsymbol{\Sigma}_t$. 
Let $Q_a(\cdot; c_\ell)$ be a symmetric quantizer with clipping parameter $c_\ell$. 
The expected activation quantization error is
\begingroup\small
\begin{equation*}
\varepsilon_{t,\ell}
:=
\mathbb{E}\!\left[
\left\|Q_a(\hat{\mathbf{H}}_{t,\ell-1};c_\ell)-\hat{\mathbf{H}}_{t,\ell-1}\right\|_F^2
\right].
\end{equation*}
\endgroup
If there exist two loops $t \neq t'$ such that either $s_t \neq s_{t'}$ (scale drift) or $\mathbf{\Sigma}_t \neq \mathbf{\Sigma}_{t'}$ (covariance drift), then no shared pair $(\mathbf{P}_\ell, c_\ell)$ can be jointly optimal for both $\varepsilon_{t,\ell}$ and $\varepsilon_{t',\ell}$. 
Consequently, shared calibration incurs strictly positive excess error on at least one loop.
\end{proposition}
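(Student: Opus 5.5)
The plan is to read ``jointly optimal'' as: a pair minimizing both $\varepsilon_{t,\ell}$ and $\varepsilon_{t',\ell}$ at once. We show that the per-loop minimizers are generically distinct. For fixed $\mathbf{P}_\ell$, write the error as a function of $c_\ell$ through the usual decomposition
\begin{equation*}
\varepsilon_{t,\ell}(c) = \underbrace{\mathbb{E}\!\left[\textstyle\sum_{|x|\le c q_{\max}} (Q_a(x;c)-x)^2\right]}_{\text{rounding}} + \underbrace{\mathbb{E}\!\left[\textstyle\sum_{|x|> c q_{\max}} (|x|-c q_{\max})^2\right]}_{\text{clipping}} .
\end{equation*}
Under the uniform-noise approximation, the rounding term is $\approx nd\,c^2/12$ times the in-range mass. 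The clipping term is decreasing in $c$. Hence each loop has a minimizer $c^\star_t$ that balances the two terms.

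\textbf{Scale drift.} Because $\hat{\mathbf{H}}_{t,\ell-1}\sim s_t\mathbf{R}_t$, the quantizer is scale-equivariant: $Q_a(s x; s c)=s\,Q_a(x;c)$. Therefore
\begin{equation*}
\varepsilon_{t,\ell}(c)=s_t^2\,\phi_t(c/s_t),
\end{equation*}
where $\phi_t$ is the error of the unit-scale law $\mathbf{R}_t$. If $\mathbf{R}_t$ and $\mathbf{R}_{t'}$ share a law with minimizer $u^\star$ (the clean case to state first), then $c^\star_t=s_t u^\star\neq s_{t'}u^\star=c^\star_{t'}$ whenever $s_t\neq s_{t'}$. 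To make this strict, I would assume $\phi$ has a \emph{unique} minimizer. This holds, for example, under a continuous density with nonvanishing tails, since the derivative of the clipping term is strictly negative while the derivative of the rounding term is strictly positive.

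\textbf{Covariance drift.} Here the optimal $\mathbf{P}_\ell$ should depend on $\boldsymbol{\Sigma}_t$. Under a Gaussian or second-moment surrogate, the per-channel error of $\hat{\mathbf{H}}\mathbf{P}$ is governed by the diagonal of $\mathbf{P}^\top\boldsymbol{\Sigma}_t\mathbf{P}$ relative to $c_\ell$. Minimizing over $\mathbf{P}$ (orthogonal, as in rotation-based PTQ) thus amounts to equalizing or balancing this diagonal, i.e., flattening outlier channels. The set of minimizers for $\boldsymbol{\Sigma}_t$ is then $\{\mathbf{P}: \mathrm{diag}(\mathbf{P}^\top\boldsymbol{\Sigma}_t\mathbf{P})=\tfrac{\mathrm{tr}\boldsymbol{\Sigma}_t}{d}\mathbf{1}\}$ or a similar characterization. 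I would argue that for generic $\boldsymbol{\Sigma}_t\neq\boldsymbol{\Sigma}_{t'}$ these sets are disjoint, or that $c$ must differ when the traces differ.

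\textbf{Main obstacle.} This covariance step is the hard part. Two different covariances can share an optimal $\mathbf{P}$; for example, $\boldsymbol{\Sigma}_{t'}=\boldsymbol{\Sigma}_t+\alpha \mathbf{I}$ keeps the same balancing rotations. So the statement as written needs a genericity assumption, or a reading in which $\boldsymbol{\Sigma}$ is not merely rescaled. I would first reduce that degenerate case back to scale drift, since a trace change forces a change in $c^\star$. The remaining case I would handle as a measure-zero exception, or by adding an explicit nondegeneracy hypothesis.

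Finally, strict excess error follows from uniqueness of minimizers. Suppose a shared pair $(\mathbf{P},c)$ achieves $\varepsilon_{t,\ell}=\min$ and $\varepsilon_{t',\ell}=\min$. Then it equals both unique minimizers, which is a contradiction. Hence $\varepsilon_{t,\ell}(\mathbf{P},c)-\min\varepsilon_{t,\ell}>0$ or the analogous gap for $t'$ is positive. Summing over loops then gives a strictly positive aggregate excess error for shared calibration.
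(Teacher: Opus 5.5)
Your scale-drift step is the same as the paper's argument: positive homogeneity $Q_a(s\mathbf X;c)=s\,Q_a(\mathbf X;c/s)$, a unique normalized minimizer, and a common normalized law so that the optimal ratios agree. You should assume uniqueness outright, as the paper does. An increasing rounding term plus a decreasing clipping term can still have several minimizers.

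The genuine gap is the covariance-drift step. You leave it open, and your balancing route cannot close it. The criterion $\mathrm{diag}(\mathbf P^\top\boldsymbol\Sigma_t\mathbf P)=\tfrac{\mathrm{tr}\boldsymbol\Sigma_t}{d}\mathbf 1$ imposes only $d-1$ equations on the $d(d-1)/2$-dimensional group $O(d)$. So for large $d$ one should expect two such solution sets to intersect, not to be disjoint, and explicit common solutions are easy to write down. For $d$ a power of two, a normalized Hadamard matrix $\mathbf H$ balances every diagonal $\mathbf D$. It also balances $\mathbf H\mathbf M\mathbf H^\top$ for every $\mathbf M$ with constant diagonal, and such a pair need not commute. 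Take $\boldsymbol\Sigma_t=\mathbf D$ and $\boldsymbol\Sigma_{t'}=\mathbf H\mathbf M\mathbf H^\top$ with $\mathrm{tr}\,\mathbf D=\mathrm{tr}\,\mathbf M$, equal scales, and Gaussian rows. Then every entry of $\mathbf R_t\mathbf H$ and of $\mathbf R_{t'}\mathbf H$ is $\mathcal N(0,\mathrm{tr}\,\mathbf D/d)$, so $(\mathbf H,c^\star)$ is jointly optimal in your own model.

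The only case your argument actually settles is $\mathrm{tr}\boldsymbol\Sigma_t\neq\mathrm{tr}\boldsymbol\Sigma_{t'}$, which is scale drift again. The equal-trace (pure shape) case is the real content of the covariance clause once scale is absorbed into $s_t$. That is exactly where your argument fails, and setting it aside as an exception does not prove the statement. Your closing step also cannot rest on uniqueness. Column permutations of $\mathbf P$ leave $\varepsilon_{t,\ell}$ unchanged, so minimizers in $(\mathbf P,c)$ are never unique. What you need is disjointness of the two minimizer sets, which is precisely the missing step.

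Some extra hypothesis is unavoidable, because the statement is false as written under the orthogonal-$\mathbf P$ reading that you and the paper both adopt. Take $d=2$, Gaussian rows, $s_t=s_{t'}$, $\boldsymbol\Sigma_t=\mathrm{diag}(3,1)$ and $\boldsymbol\Sigma_{t'}=\mathrm{diag}(1,3)$. Every orthogonal $\mathbf P$ yields the same two column variances in swapped order. Hence $\varepsilon_{t,\ell}(\mathbf P,c)=\varepsilon_{t',\ell}(\mathbf P,c)$ identically, and every minimizer is shared. Your $\boldsymbol\Sigma+\alpha\mathbf I$ example is not such a case, as you note, because it moves $c^\star$.

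The paper uses a different criterion at this point. It asserts that an optimal orthogonal $\mathbf P_\ell$ must make $\mathbf P_\ell^\top\boldsymbol\Sigma_t\mathbf P_\ell$ diagonal. A shared optimum then forces simultaneous orthogonal diagonalization, which for symmetric matrices is equivalent to $\boldsymbol\Sigma_t\boldsymbol\Sigma_{t'}=\boldsymbol\Sigma_{t'}\boldsymbol\Sigma_t$. What this actually establishes is the claim for noncommuting covariances; the example above commutes. This eigenvector-pinning criterion, together with the commutation lemma, is the ingredient your proposal lacks.

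Be aware that the paper's criterion is itself only asserted. It also conflicts with your balancing picture, under which, by the Hadamard example, noncommutativity is not enough. To finish, commit to one criterion and state the matching hypothesis explicitly. With the diagonalization criterion, assume noncommuting covariances. With yours, assume unequal traces or explicitly disjoint balancing sets.
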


\vparagraph{Error Propagation across Loops.}
Beyond distribution shift, PTQ is particularly fragile in LoopLMs because quantization perturbs the entire recurrent hidden-state trajectory rather than isolated layer computations. Recall that $\mathbf{H}_{t+1}=F(\mathbf{H}_t)$ and $\widetilde{\mathbf{H}}_{t+1}=\widetilde F(\widetilde{\mathbf{H}}_t)$ denote the full-precision and quantized recurrences over $L$ layers, respectively. Their mismatch arises from two coupled sources: (i) approximation error introduced within each loop by quantizing the layer mappings, and (ii) state mismatch caused by feeding the quantized hidden state $\widetilde{\mathbf{H}}_t$ into subsequent computations instead of its full-precision counterpart $\mathbf{H}_t$. As a result, quantization error is not incurred independently at each layer or loop, but instead recursively alters future hidden states and compounds along the recurrent trajectory, especially at loop transitions $t\rightarrow t+1$ (see Fig.~\ref{fig:fig1}).

\begin{proposition}
\label{prop:inter_loop}
Define the loop error as $\varepsilon_t := \|\widetilde{\mathbf{H}}_t-\mathbf{H}_t\|_F$. Assume that both recurrences start from the same initial state and, for each loop $t$, the full-precision function $F$ is $\gamma_t$-Lipschitz on the segment between $\widetilde{\mathbf{H}}_t$ and $\mathbf{H}_t$. Then $\varepsilon_{t+1} \le \varepsilon_t^{\mathrm{quant}}+\gamma_t\varepsilon_t$, where $\varepsilon_t^{\mathrm{quant}} := \|\widetilde{F}(\widetilde{\mathbf{H}}_t)-F(\widetilde{\mathbf{H}}_t)\|_F$. Unrolling this recursion yields
\begingroup\small
\begin{equation*}
\varepsilon_T
\le
\sum_{\tau=0}^{T-1}
\left(\prod_{t=\tau+1}^{T-1}\gamma_t\right)
\varepsilon_\tau^{\mathrm{quant}}.
\end{equation*}
\endgroup
Thus, each loop-wise quantization error is propagated through downstream loop sensitivities, with earlier errors passing through a longer chain of transformations.
\end{proposition}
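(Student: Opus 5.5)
The plan is to derive the one-step inequality by inserting the intermediate quantity $F(\widetilde{\mathbf{H}}_t)$ and applying the triangle inequality, and then to unroll that inequality by induction on $T$.

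\textbf{One-step bound.} Using $\widetilde{\mathbf{H}}_{t+1}=\widetilde F(\widetilde{\mathbf{H}}_t)$ and $\mathbf{H}_{t+1}=F(\mathbf{H}_t)$, I write
\begin{equation*}
\widetilde{\mathbf{H}}_{t+1}-\mathbf{H}_{t+1}
=\bigl(\widetilde F(\widetilde{\mathbf{H}}_t)-F(\widetilde{\mathbf{H}}_t)\bigr)+\bigl(F(\widetilde{\mathbf{H}}_t)-F(\mathbf{H}_t)\bigr).
\end{equation*}
Taking Frobenius norms, the first term is exactly $\varepsilon_t^{\mathrm{quant}}$ by definition. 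For the second term, the assumption gives $\gamma_t$-Lipschitzness of $F$ on the segment between $\widetilde{\mathbf{H}}_t$ and $\mathbf{H}_t$. Only the two endpoints of that segment are compared, so this assumption is all that is needed, and it yields $\|F(\widetilde{\mathbf{H}}_t)-F(\mathbf{H}_t)\|_F\le\gamma_t\varepsilon_t$. Together these give $\varepsilon_{t+1}\le\varepsilon_t^{\mathrm{quant}}+\gamma_t\varepsilon_t$.

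\textbf{Unrolling.} Since both recurrences start from the same initial state, $\varepsilon_0=0$. I then prove by induction on $s\ge 0$ that
\begin{equation*}
\varepsilon_s\le\sum_{\tau=0}^{s-1}\Bigl(\prod_{t=\tau+1}^{s-1}\gamma_t\Bigr)\varepsilon_\tau^{\mathrm{quant}},
\end{equation*}
with the empty product equal to $1$ and the empty sum equal to $0$. The base case $s=0$ is trivial. For the inductive step, I substitute the hypothesis into the one-step bound. Because $\gamma_s\ge 0$ (it is a Lipschitz constant), multiplying the inequality by $\gamma_s$ preserves its direction. The factor $\gamma_s$ extends each product to $\prod_{t=\tau+1}^{s}\gamma_t$, and the new term $\varepsilon_s^{\mathrm{quant}}$ appears with the empty product, i.e.\ coefficient $1$. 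This is exactly the formula at $s+1$. Setting $s=T$ gives the stated bound.

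\textbf{Main obstacle.} The argument is mostly bookkeeping, and the only delicate points are these. Nonnegativity of $\gamma_t$ is essential for the inequality to survive multiplication in the induction. The Lipschitz assumption is only local, along the segment, so I must make sure it is applied only to the pair $(\widetilde{\mathbf{H}}_t,\mathbf{H}_t)$ and never to arbitrary points. The index convention for the empty product at $\tau=T-1$ has to match the statement. Finally, the interpretive sentence (earlier errors pass through longer chains) follows by reading off that the coefficient of $\varepsilon_\tau^{\mathrm{quant}}$ contains $T-1-\tau$ Lipschitz factors.
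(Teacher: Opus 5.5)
Your proposal is correct and follows essentially the same route as the paper: insert $F(\widetilde{\mathbf{H}}_t)$, apply the triangle inequality and the segment-wise Lipschitz bound to get $\varepsilon_{t+1}\le\varepsilon_t^{\mathrm{quant}}+\gamma_t\varepsilon_t$, then unroll using $\varepsilon_0=0$. Your explicit induction, together with the remark that $\gamma_t\ge 0$ is what lets the inequality survive multiplication, spells out the unrolling step that the paper simply states in closed form (with an $\varepsilon_0$ term that it then sets to zero).
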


\begin{takeaway}
\textbf{Takeaway.}
Accordingly, error accumulation in quantized LoopLMs is governed by the loop-wise quantization error $\varepsilon_t^{\mathrm{quant}}$ and sensitivity $\gamma_t$. Since LoopLMs use a shared static quantization configuration across loops, loop-dependent distribution shifts are unavoidable, as shown in Proposition~\ref{prop:inter_layer}. These shifts increase $\varepsilon_t^{\mathrm{quant}}$ and can move hidden states into regimes where $\gamma_t>1$. Proposition~\ref{prop:inter_loop} then implies that errors can be multiplicatively amplified across loops rather than merely accumulate linearly.
\end{takeaway}

\section{LoopQ: Loop-Aware PTQ for Looped Transformers}
\label{sec:loopq}
As shown in Proposition~\ref{prop:inter_loop}, accumulated quantization error is governed by the within-loop error $\varepsilon_t^{\mathrm{quant}}$ and the cross-loop amplification factor $\gamma_t$. This motivates LoopQ to reduce both local quantization mismatch and recurrent error amplification. Starting from a shared transformation-based quantized backbone~\cite{liu2025spinquant,sun2025flatquant}, LoopQ introduces lightweight loop-dependent adaptations for dynamic hidden-state evolution: \textbf{(i)} \textit{Loop-aware Activation Scaling} (LAS) uses loop-dependent activation ranges to reduce scale mismatch at low cost; \textbf{(ii)} \textit{Selective Loop-aware Transformation} (SLT) relaxes shared transformations only for critical modules identified by sharing-gap analysis, reducing geometry mismatch; \textbf{(iii)} the \textit{Cross-loop Transition Adapter} (CTA) aligns the quantized loop output before it becomes the next loop input, limiting cross-loop amplification; and \textbf{(iv)} \textit{Trajectory-Aware Calibration} jointly optimizes LAS, SLT, and CTA under the recurrent dynamics that drive error accumulation.
\subsection{\methodA}

Sharing quantization parameters, such as $\mathbf{P}_\ell$ and $c_\ell$, across loops creates a key trade-off in LoopLM quantization. Full sharing maximizes reuse and deployment efficiency, but becomes increasingly misaligned as hidden states evolve in scale and direction (Proposition~\ref{prop:inter_layer}). Fully specialized configurations better fit loop-dependent distributions, but introduce substantial memory overhead and weaken recursive efficiency. Thus, LoopQ balances these goals by preserving a shared backbone and adding lightweight loop-dependent parameters only where mismatch is most pronounced.

We begin with the activation scale $c_\ell$, which directly controls quantization resolution and dynamic range. Since the same shared module processes loop-dependent hidden states with varying magnitudes, a shared $c_\ell$ induces an inherent trade-off: ranges suitable for large activations are too coarse for smaller ones, while ranges tuned for small activations cause clipping for larger ones~\cite{xiao2023smoothquant,sun2025flatquant}. This misalignment introduces quantization error that propagates across loops, either reducing resolution or causing irreversible information loss, and accumulates along the recurrent trajectory. To mitigate this effect, LoopQ replaces the shared activation range $c_\ell$ with loop-dependent scales $c_{t,\ell}$, i.e., $Q_a(x; c_{t,\ell})$.

These additional parameters are negligible compared to the shared backbone, introducing only $O(TL)$ scalars and thus incurring minimal memory and deployment overhead without meaningfully increasing model capacity. While loop-dependent scaling corrects magnitude mismatch, it does not resolve geometry mismatch. Different loop iterations may still favor different activation geometries that a shared transformation $\mathbf{P}_\ell$ cannot fully capture. Introducing $\mathbf{P}_{t,\ell}$ for all loops is prohibitively expensive, motivating a more selective mechanism, which we develop next.
\subsection{\methodB}

We next address residual directional mismatch that cannot be resolved by loop-dependent scaling alone. As the shared module assumes different roles across loops, activation distributions become role-dependent. A single shared transform $\mathbf{P}_\ell$ must therefore fit multiple role-specific regimes, making strict sharing suboptimal when these roles favor incompatible transformations. We therefore seek to identify layers where loop-dependent adaptation is most beneficial.

Specifically, LoopQ employs \textit{sharing-gap analysis} to estimate module-wise sensitivity to shared quantization. For each layer $\ell$, we evaluate the effect of introducing loop-dependent transforms $\mathbf{P}_{t,\ell}$ while keeping all other layers shared. This defines a sharing-gap objective that measures the resulting change in the overall loss $\mathcal{L}$ (Sec.~\ref{sec:loopq_unified}). Layers with a large sharing gap are thus less suitable for strict parameter sharing.\footnote{Selection is performed at the matrix level rather than the whole-layer level, enabling more fine-grained adaptation.}

However, directly optimizing this objective would require retraining the model under loop-dependent parameterizations, which is computationally prohibitive. We therefore approximate the sharing-gap objective using a tractable surrogate. Let $\mathbf{P}_{t,\ell} = \mathbf{P}_\ell + \boldsymbol{\Delta}_{t,\ell}$ denote the loop-dependent transform at loop $t$. Expanding $\mathcal{L}$ around $\mathbf{P}_\ell$ via a second-order Taylor expansion yields
\begingroup\small
\begin{equation}
\mathcal{L}(\mathbf{P}_{t,\ell})
\approx
\mathcal{L}(\mathbf{P}_\ell)
+
\nabla \mathcal{L}(\mathbf{P}_\ell)^\top \boldsymbol{\Delta}_{t,\ell}
+
\frac{1}{2}\boldsymbol{\Delta}_{t,\ell}^\top
\nabla^2 \mathcal{L}(\mathbf{P}_\ell)
\boldsymbol{\Delta}_{t,\ell}.
\end{equation}
\endgroup
This shows that introducing loop-dependent variation corresponds to a structured perturbation around the shared solution. The first-order term reflects a shift in the optimal shared parameters, which can be partially compensated by adjusting $\mathbf{P}_\ell$, while the remaining effect is governed by local curvature. In principle, the sharing-gap objective can be approximated through this second-order expansion.

However, explicitly computing layer-wise Hessians remains impractical for large models. We instead approximate the objective using a proxy based on the inconsistency of gradient directions across loops after curvature normalization~\cite{frantar2022gptq}. Large variation indicates that different loops favor incompatible updates, making a single shared transform insufficient to capture all loop regimes.
\begingroup\small
\begin{equation}
    S_\ell
    =
    \sum_j
    \frac{\operatorname{Var}_{t}\!\left(\phi_{t,\ell,j}\right)}{\psi_{\ell,j}+\epsilon}
    \approx
    \sum_{j}
    \frac{
        \frac{1}{T}\sum_{t=0}^{T-1} \phi_{t,\ell,j}^2
        -
        \left(\frac{1}{T}\sum_{t=0}^{T-1} \phi_{t,\ell,j}\right)^2
    }{
        \psi_{\ell,j}+\epsilon
    }.
\end{equation}
\endgroup
Here, $\phi_{t,\ell,j}$ denotes the $j$-th coordinate of the gradient of $\mathcal{L}$ with respect to $\mathbf{P}_\ell$ at loop $t$, and $\psi_{\ell,j}$ is the corresponding diagonal Fisher curvature estimate~\cite{martens2015optimizing}. The constant $\epsilon>0$ prevents near-zero curvature dimensions from being overestimated. Thus, $S_\ell$ measures curvature-normalized gradient variance across loops, capturing the inconsistency of update directions under recursive computation and identifying layers less suitable for parameter sharing. Since $\mathcal{L}$ is defined over the recurrent trajectory, this score prioritizes transformations whose shared mismatch has larger downstream effects across loops.

Finally, LoopQ progressively selects sharing-sensitive layers according to $S_\ell$. Instead of relying on a one-shot ranking, LoopQ alternates between adapting the selected transformations and re-evaluating $S_\ell$, allowing subsequent selections to target complementary residual mismatch. For the selected layers, LoopQ introduces loop-dependent transforms $\mathbf{P}_{t,\ell}$ together with loop-dependent activation ranges $c_{t,\ell}$, while the remaining layers stay fully shared with a single $\mathbf{P}_\ell$ across loops. This strategy requires loop-dependent adaptation for only about $5\%$ of transformation parameters.
\subsection{\methodC}

Even after mitigating layer-level mismatch through loop-dependent scaling and transformation, residual errors can still accumulate at loop transitions due to state reuse. Specifically, the output of one loop is directly fed as the input to the next, allowing quantization-induced distortions to carry forward and amplify across loops. A lightweight correction at this interface can therefore reduce cross-loop error accumulation (Proposition~\ref{prop:inter_loop}) without modifying the shared quantized backbone.

LoopQ introduces a lightweight cross-loop adapter $A_t$ that adjusts the quantized hidden state before it enters loop $t+1$. The adapted hidden state is computed as $\widetilde{\mathbf{H}}_{t+1,0}=A_t(\widetilde{\mathbf{H}}_{t,L})$, where
\begingroup\small
\begin{equation}
    A_t(\widetilde{\mathbf{H}}_{t,L})
    =
    \widetilde{\mathbf{H}}_{t,L}
    +
    (\mathbf{a}_t-\mathbf{1})\odot\operatorname{RMSNorm}(\widetilde{\mathbf{H}}_{t,L})
    +
    \mathbf{b}_t
    +
    \bigl((\operatorname{RMSNorm}(\widetilde{\mathbf{H}}_{t,L})\mathbf{V})\odot\eta_t\bigr)\mathbf{U}^{\top}.
\end{equation}
\endgroup
Here, $\operatorname{RMSNorm}(\cdot)$ denotes normalization along the feature dimension~\cite{zhang2019root}. The affine correction rescales each feature by $\mathbf{a}_t$ and shifts it by $\mathbf{b}_t$, stabilizing activation distributions at loop transitions. The low-rank correction captures structured mismatch beyond this coarse adjustment. The matrices $\mathbf{U}$ and $\mathbf{V}$ are shared across loops to provide reusable correction directions, while only the lightweight loop-dependent parameters $\{\mathbf{a}_t, \mathbf{b}_t, \eta_t\}$ vary with $t$. The gate $\eta_t$ modulates the correction strength across loop depth with negligible overhead. 
\subsection{\methodD}
\label{sec:loopq_unified}

Building on the preceding components, we introduce a trajectory-aware objective that aligns calibration and distillation with loop evolution, enabling shared and loop-dependent parameters to be optimized consistently with the recursive inference process.

To recover performance after quantization, LoopQ performs joint distillation~\cite{hinton2015distilling} on the calibration set $\mathcal{D}$ using the full-precision model as the reference. The objective is defined as
\begingroup\small
\begin{equation}
\resizebox{0.95\linewidth}{!}{$\displaystyle
\mathcal{L} =
\mathbb{E}_{x\sim\mathcal{D}}
\Biggl[
\operatorname{KL}\!\left(\mathbf{Z}\,\middle\|\,\widetilde{\mathbf{Z}}\right)
+ \lambda
\Bigl(
\sum_{t=0}^{T-1}
\Bigl(
\left\|
\widetilde{\mathbf{H}}_{t,L}
-
\mathbf{H}_{t,L}
\right\|_2^2
+
\left\|
\widetilde{\mathbf{H}}_{t,L}
-
\mathbf{H}_{T}
\right\|_2^2
\Bigr)
+
\sum_{t=0}^{T-2}
\left\|
A_t(\widetilde{\mathbf{H}}_{t,L})-\mathbf{H}_{t+1,0}
\right\|_2^2
\Bigr)
\Biggr]
$}
\label{eq:loopq_unified_loss}
\end{equation}
\endgroup

The first term minimizes the KL divergence between the full-precision logits $\mathbf{Z}$ and quantized logits $\widetilde{\mathbf{Z}}$, enforcing prediction-level consistency. The trajectory-level terms serve two complementary purposes: (a) aligning $\widetilde{\mathbf{H}}_{t,L}$ with $\mathbf{H}_{t,L}$ to preserve loop-wise full-precision hidden states, and (b) encouraging $\widetilde{\mathbf{H}}_{t,L}$ to approach the final recurrent state $\mathbf{H}_T$, consistent with fixed-point convergence~\cite{jeddi2026loopformer} in looped Transformers, with further details discussed in Appendix~\ref{apx:adaptive_weighting}. The transition term regularizes loop-to-loop dynamics by requiring the adapted quantized state $A_t(\widetilde{\mathbf{H}}_{t,L})$ to approximate the next-loop input state $\mathbf{H}_{t+1,0}$. Here, $\lambda$ balances logit-level supervision and trajectory-level matching.

Because LoopLMs recursively reuse hidden states, the input distribution evolves across loops rather than remaining fixed. Since $\mathbf{H}_{t,L}$ depends on states from earlier loops, conventional layer-wise calibration cannot capture cross-loop distribution shifts or error propagation. LoopQ addresses this by jointly optimizing Eq.~\eqref{eq:loopq_unified_loss} along the recursive computation path, allowing gradients from later loops to update shared quantized components and earlier loop-dependent adaptations. The optimized parameters include shared transforms $\mathbf{P}_\ell$ and loop-dependent components: activation ranges $c_{t,\ell}$, selected transforms $\mathbf{P}_{t,\ell}$ identified via sharing-gap analysis, and transition adapters $A_t$. This jointly corrects distribution shifts and accumulated errors across loops.

\section{Experiments}
\label{sec:experiment}

We evaluate LoopQ on representative LoopLMs under standard post-training quantization settings. In the main text, we first describe the experimental setup, compare LoopQ with static quantization baselines, and present component ablation studies. Additional analyses on dynamic-layer selection, calibration size, and loop-extrapolation stability are provided in Appendix~\ref{apx:detailed_analysis}.

\subsection{Experimental Setup}
\label{sec:setup}
\vparagraph{Benchmarks.}
Following prior quantization work, we evaluate LoopQ on a standard suite of downstream and language-modeling benchmarks. For all accuracy-based benchmarks, we report normalized accuracy (\texttt{acc\_norm}) whenever available, and otherwise use the standard accuracy reported by the evaluation harness. We report HellaSwag (\texttt{HS})~\cite{zellers2019hellaswag} and WinoGrande (\texttt{WG})~\cite{sakaguchi2021winogrande} for commonsense reasoning, ARC-Challenge (\texttt{AC})~\cite{clark2018think} and MMLU (\texttt{MM})~\cite{hendrycks2020measuring} for language understanding, and OpenAI LAMBADA (\texttt{LB})~\cite{paperno2016lambada,radford2019language} for long-context prediction. We further report perplexity on WikiText-2 (\texttt{WT2})~\cite{merity2016pointer} and LAMBADA (\texttt{OP})~\cite{paperno2016lambada,radford2019language} to assess generative quality.

\vparagraph{Baselines.} 
We compare against \textit{BF16} as the full-precision reference, direct \textit{Symmetric} PTQ as a naive baseline, \textit{QuaRot}~\cite{ashkboos2024quarot} with weight rotation, \textit{SpinQuant}~\cite{liu2025spinquant} with learned rotation, \textit{FlatQuant}~\cite{sun2025flatquant} with affine flattening, and \textit{SmoothQuant}~\cite{xiao2023smoothquant} with offline scaling. We evaluate all methods under both W4A4 and W4A8 settings, corresponding to 4-bit weights with 4-bit or 8-bit activations. All methods share the same backbone, so performance differences reflect quantization robustness rather than additional model capacity.

\vparagraph{Backbones.}
We evaluate LoopQ on four LoopLM backbones from three representative LoopLM architectures, including standard LoopLM \textit{Ouro}~\cite{zhu2025scaling}: 1.4B with \(24\) layers over \(4\) loops and 2.6B with \(48\) layers over \(4\) loops; shortcut-modulated \textit{LoopFormer}~\cite{jeddi2026loopformer}: \(3{\times}8\), with \(3\) layers over \(8\) loops; and partial-looped \textit{Parcae}~\cite{prairie2026parcae}: 370M with \(4\) looped layers over \(8\) loops plus \(8\) non-looped layers.

\vparagraph{Implementation.}
Following standard PTQ practice, we use symmetric uniform post-training quantization with round-to-nearest (RTN). Calibration samples are drawn from the Pile~\cite{gao2020pile}, and both weights and activations are quantized group-wise with a group size of 32. For dynamically selected loop-dependent modules, LoopQ adapts only about 5\% of the transform parameters $\mathbf{P}_{t,\ell}$. All accuracy results are averaged over 10 runs, with standard deviations below 3\%. We use FlatQuant-style Kronecker-decomposed transforms~\cite{sun2025flatquant} and low-rank matrices $\mathbf{U}$ and $\mathbf{V}$, resulting in a negligible increase in the total number of parameters. Additional implementation details and deployment-overhead analyses are provided in Appendices~\ref{apx:parameter_usage} and~\ref{apx:implementation_details}.
\subsection{Main Results}

\vparagraph{Overall improvement.}
Table~\ref{tab:main_results} shows that static PTQ baselines are poorly matched to LoopLMs because they use a fixed quantization configuration for loop-dependent hidden states. This limitation is most evident under W4A4, where static methods often become unstable: LoopQ improves mean downstream accuracy by $68.8\%$ and reduces average perplexity by about $87.7\%$ over the strongest static baselines. In particular, static quantization methods frequently degrade sharply on perplexity metrics, especially \texttt{OP}. Since perplexity reflects prediction uncertainty, this degradation suggests that fixed quantization configurations amplify cross-loop errors and lead to less reliable generation. Under W4A8, higher activation precision provides more tolerance, but LoopQ still improves mean downstream accuracy by about $9.8\%$ and reduces average perplexity by about $17.1\%$.

\vparagraph{Task-level behavior.}
The largest gains appear on LAMBADA-related metrics, including task accuracy \texttt{LB} and perplexity \texttt{OP}. Under W4A4 quantization, LoopQ achieved \texttt{LB} accuracies of $34.9\%$ on LoopFormer and $30.0\%$ on Parcae, while even the strongest static baselines yielded near-zero performance. LoopQ also reduced \texttt{OP} perplexity on LoopFormer and Parcae by approximately 64.5\% and 88.3\%, respectively. This shows that long-context prediction is especially sensitive to accumulated cross-loop distortion. In contrast, gains are more moderate on \texttt{WG}, \texttt{AC}, and \texttt{MM}, with average W4A4 improvements of about $33.5\%$, since these short-answer classification-style tasks are less exposed to cumulative prediction errors. This suggests that LoopQ is most beneficial when quantization errors repeatedly propagate through long-context recurrent computation.

\vparagraph{Backbone-level robustness.}
LoopQ brings the largest gains on Ouro, where stronger BF16 performance leaves more recoverable capability under quantization. Averaged over W4A4 and W4A8, LoopQ improves mean downstream accuracy over the strongest static baseline by about $61.8\%$ on Ouro 1.4B and $76.3\%$ on Ouro 2.6B. In contrast, LoopFormer and Parcae have lower full-precision performance, limiting the ceiling for recovery and leading to smaller gains of $7.1\%$ and $12.0\%$, respectively. This suggests that LoopQ is especially beneficial for stronger LoopLMs, where improved quantization robustness translates more directly into downstream accuracy gains.

\begin{table*}[t]
\centering
\small
\caption{Main results under W4A4 and W4A8. BF16 is repeated across the two quantization groups as the same full-precision reference and is italicized to distinguish it from quantized results.}
\label{tab:main_results}
\setlength{\tabcolsep}{3pt}
\newcommand{\bfref}[1]{\textit{#1}}
\resizebox{\textwidth}{!}{%
\begin{tabular}{cc ccccccc ccccccc}
\toprule
& & \multicolumn{7}{c}{\textbf{W4A4}} & \multicolumn{7}{c}{\textbf{W4A8}} \\
\cmidrule(lr){3-9} \cmidrule(lr){10-16}
& & \multicolumn{5}{c}{\textbf{Tasks}~$\uparrow$} & \multicolumn{2}{c}{\textbf{Perplexity}~$\downarrow$} & \multicolumn{5}{c}{\textbf{Tasks}~$\uparrow$} & \multicolumn{2}{c}{\textbf{Perplexity}~$\downarrow$} \\
\cmidrule(lr){3-7} \cmidrule(lr){8-9} \cmidrule(lr){10-14} \cmidrule(lr){15-16}
\textbf{Model} & \textbf{Method}
& \textbf{HS}
& \textbf{WG}
& \textbf{LB}
& \textbf{AC}
& \textbf{MM}
& \textbf{WT2}
& \textbf{OP}
& \textbf{HS}
& \textbf{WG}
& \textbf{LB}
& \textbf{AC}
& \textbf{MM}
& \textbf{WT2}
& \textbf{OP} \\
\midrule
\multirow{7}{*}{\rotatebox[origin=c]{90}{\textbf{Ouro 1.4B}}}
& \bfref{BF16} & \bfref{0.7159} & \bfref{0.6717} & \bfref{0.6505} & \bfref{0.5256} & \bfref{0.6823} & \bfref{12.03} & \bfref{5.30} & \bfref{0.7159} & \bfref{0.6717} & \bfref{0.6505} & \bfref{0.5256} & \bfref{0.6823} & \bfref{12.03} & \bfref{5.30} \\
& Symmetric & 0.2777 & 0.4941 & 0.0033 & 0.2389 & 0.2282 & 1.17e3 & 8.20e4 & 0.5022 & 0.5866 & 0.3220 & 0.3745 & 0.3569 & 43.89 & 46.72 \\
& SmoothQuant & 0.2575 & 0.5170 & 0.0000 & 0.2457 & 0.2499 & 7.71e4 & 2.91e7 & 0.6010 & 0.5722 & 0.2851 & 0.4104 & 0.2910 & 33.27 & 60.21 \\
& QuaRot & 0.2908 & 0.4972 & 0.0293 & 0.2270 & 0.2366 & 451.44 & 1.58e4 & 0.6817 & 0.6195 & 0.5523 & 0.4983 & 0.5577 & 15.27 & 8.95 \\
& SpinQuant & 0.2914 & 0.4925 & 0.0078 & 0.2210 & 0.2352 & 653.78 & 4.37e4 & 0.4838 & 0.5630 & 0.3647 & 0.3745 & 0.3240 & 27.88 & 27.61 \\
& FlatQuant & 0.2698 & 0.4483 & 0.0000 & 0.2278 & 0.2358 & 1.05e4 & 4.40e5 & 0.6948 & \textbf{0.6511} & 0.5962 & 0.4983 & \textbf{0.6336} & 13.53 & 6.92 \\
& LoopQ & \textbf{0.6582} & \textbf{0.5951} & \textbf{0.5659} & \textbf{0.4881} & \textbf{0.5465} & \textbf{15.81} & \textbf{8.18} & \textbf{0.6954} & 0.6456 & \textbf{0.6255} & \textbf{0.5145} & 0.6206 & \textbf{13.29} & \textbf{6.09} \\
\midrule
\multirow{7}{*}{\rotatebox[origin=c]{90}{\textbf{Ouro 2.6B}}}
& \bfref{BF16} & \bfref{0.7636} & \bfref{0.6867} & \bfref{0.7013} & \bfref{0.5802} & \bfref{0.7380} & \bfref{10.51} & \bfref{4.22} & \bfref{0.7636} & \bfref{0.6867} & \bfref{0.7013} & \bfref{0.5802} & \bfref{0.7380} & \bfref{10.51} & \bfref{4.22} \\
& Symmetric & 0.2811 & 0.5043 & 0.0361 & 0.2244 & 0.2311 & 962.74 & 7.81e4 & 0.5517 & 0.5777 & 0.4093 & 0.3729 & 0.3386 & 17.21 & 22.07 \\
& SmoothQuant & 0.2589 & 0.5012 & 0.0000 & 0.2662 & 0.2455 & 9.09e4 & 2.09e7 & 0.3219 & 0.5099 & 0.0693 & 0.2363 & 0.2297 & 198.38 & 1.81e3 \\
& QuaRot & 0.2967 & 0.4964 & 0.0782 & 0.2329 & 0.2307 & 301.26 & 3.65e3 & 0.5389 & 0.5659 & 0.4157 & 0.3874 & 0.4419 & 15.78 & 21.53 \\
& SpinQuant & 0.2805 & 0.4862 & 0.0200 & 0.2193 & 0.2316 & 621.14 & 5.17e4 & 0.5651 & 0.5754 & 0.4611 & 0.4044 & 0.4432 & 15.81 & 17.26 \\
& FlatQuant & 0.2560 & 0.4917 & 0.0000 & 0.2560 & 0.2455 & 3.07e4 & 1.62e7 & 0.6393 & 0.6085 & 0.3895 & 0.4309 & 0.4519 & 25.51 & 26.07 \\
& LoopQ & \textbf{0.6860} & \textbf{0.6227} & \textbf{0.5680} & \textbf{0.5034} & \textbf{0.5265} & \textbf{15.89} & \textbf{8.17} & \textbf{0.7555} & \textbf{0.6780} & \textbf{0.6837} & \textbf{0.5819} & \textbf{0.7007} & \textbf{11.38} & \textbf{4.47} \\
\midrule
\multirow{7}{*}{\rotatebox[origin=c]{90}{\textbf{LoopFormer 3$\times$8}}}
& \bfref{BF16} & \bfref{0.3218} & \bfref{0.5107} & \bfref{0.3776} & \bfref{0.2457} & \bfref{0.2306} & \bfref{35.62} & \bfref{26.26} & \bfref{0.3218} & \bfref{0.5107} & \bfref{0.3776} & \bfref{0.2457} & \bfref{0.2306} & \bfref{35.62} & \bfref{26.26} \\
& Symmetric & 0.2731 & 0.4988 & 0.0016 & 0.2457 & 0.2391 & 1.11e3 & 3.41e4 & 0.3156 & 0.5099 & 0.3621 & \textbf{0.2389} & \textbf{0.2311} & 39.00 & 29.51 \\
& SmoothQuant & 0.2653 & 0.4972 & 0.0004 & 0.2440 & \textbf{0.2418} & 4.08e3 & 1.16e5 & 0.3019 & 0.5178 & 0.1296 & 0.2363 & 0.2301 & 113.70 & 396.20 \\
& QuaRot & 0.3087 & 0.5264 & 0.0869 & 0.2389 & 0.2295 & 56.05 & 290.49 & \textbf{0.3207} & 0.5209 & 0.3705 & 0.2363 & 0.2303 & 37.47 & 29.87 \\
& SpinQuant & 0.2959 & 0.5051 & 0.2331 & 0.2176 & 0.2301 & 98.20 & 88.91 & 0.3049 & 0.5146 & 0.2451 & 0.2244 & 0.2291 & 42.83 & 92.20 \\
& FlatQuant & 0.2782 & 0.5170 & 0.0276 & 0.2184 & 0.2318 & 462.39 & 5.03e3 & 0.3186 & 0.5193 & \textbf{0.3798} & 0.2338 & 0.2300 & 38.25 & \textbf{26.85} \\
& LoopQ & \textbf{0.3179} & \textbf{0.5296} & \textbf{0.3487} & \textbf{0.2561} & 0.2345 & \textbf{37.05} & \textbf{31.59} & 0.3199 & \textbf{0.5264} & 0.3743 & 0.2372 & 0.2300 & \textbf{36.74} & 27.53 \\
\midrule
\multirow{7}{*}{\rotatebox[origin=c]{90}{\textbf{Parcae 370M}}}
& \bfref{BF16} & \bfref{0.4409} & \bfref{0.5446} & \bfref{0.3565} & \bfref{0.2910} & \bfref{0.2558} & \bfref{26.98} & \bfref{32.55} & \bfref{0.4409} & \bfref{0.5446} & \bfref{0.3565} & \bfref{0.2910} & \bfref{0.2558} & \bfref{26.98} & \bfref{32.55} \\
& Symmetric & 0.2992 & 0.5091 & 0.0097 & 0.2227 & \textbf{0.2455} & 797.53 & 7.26e4 & 0.3983 & 0.5130 & 0.1013 & 0.2688 & 0.2447 & 82.00 & 645.47 \\
& SmoothQuant & 0.3252 & 0.4980 & 0.0833 & 0.2270 & 0.2379 & 489.94 & 914.59 & 0.3511 & 0.5162 & 0.1310 & 0.2312 & 0.2355 & 162.19 & 250.89 \\
& QuaRot & 0.3534 & 0.5146 & 0.1289 & 0.2321 & 0.2349 & 273.18 & 454.18 & 0.4214 & 0.5272 & 0.3266 & 0.2756 & 0.2334 & 34.41 & 45.35 \\
& SpinQuant & 0.3057 & 0.5185 & 0.0305 & 0.2201 & 0.2354 & 837.26 & 8.34e3 & 0.3765 & 0.5051 & 0.1807 & 0.2568 & \textbf{0.2481} & 207.42 & 276.29 \\
& FlatQuant & 0.3191 & 0.4972 & 0.0695 & 0.2278 & 0.2310 & 219.36 & 1.14e3 & 0.4319 & 0.5422 & 0.3278 & 0.2867 & 0.2366 & 28.80 & 40.05 \\
& LoopQ & \textbf{0.4174} & \textbf{0.5214} & \textbf{0.3004} & \textbf{0.2961} & 0.2346 & \textbf{32.05} & \textbf{53.15} & \textbf{0.4411} & \textbf{0.5454} & \textbf{0.3577} & \textbf{0.2929} & 0.2441 & \textbf{27.83} & \textbf{34.17} \\
\bottomrule
\end{tabular}%
}
\end{table*}
\begin{table*}[t]
\centering
\caption{Ablation study of Ouro 1.4B under W4A4 and W4A8.}
\label{tab:ablation}
\setlength{\tabcolsep}{3pt}
\resizebox{\textwidth}{!}{%
\begin{tabular}{c ccccccc ccccccc}
\toprule
& \multicolumn{7}{c}{\textbf{W4A4}} & \multicolumn{7}{c}{\textbf{W4A8}} \\
\cmidrule(lr){2-8} \cmidrule(lr){9-15}
\textbf{Variant}
& \textbf{HS}
& \textbf{WG}
& \textbf{LB}
& \textbf{AC}
& \textbf{MM}
& \textbf{WT2}
& \textbf{OP}
& \textbf{HS}
& \textbf{WG}
& \textbf{LB}
& \textbf{AC}
& \textbf{MM}
& \textbf{WT2}
& \textbf{OP} \\
\midrule

w/o \methodAabbr    
& 0.2602 & 0.4949 & 0.0000 & 0.2534 & 0.2295 & 1.45e5 & 5.71e6 
& 0.6653 & 0.6322 & 0.5804 & 0.5213 & 0.5034 & 14.83 & 7.53 \\

w/o \methodBabbr  
& 0.6178 & 0.5738 & 0.5063 & 0.4215 & 0.3774 & 19.27 & 12.01 
& 0.6910 & 0.6433 & \textbf{0.6334} & 0.5205 & 0.6107 & 13.58 & 6.51 \\

w/o \methodCabbr 
& 0.6567 & 0.5909 & 0.5540 & 0.4812 & 0.4865 & 16.33 & 8.63
& 0.6940 & 0.6393 & 0.6047 & \textbf{0.5316} & 0.6081 & 13.46 & 6.87 \\

LoopQ                
& \textbf{0.6582} & \textbf{0.5951} & \textbf{0.5659} & \textbf{0.4881} & \textbf{0.5465} & \textbf{15.81} & \textbf{8.18} 
& \textbf{0.6954} & \textbf{0.6456} & 0.6255 & 0.5145 & \textbf{0.6206} & \textbf{13.29} & \textbf{6.09} \\

\bottomrule
\end{tabular}%
}
\end{table*}

\subsection{Analysis}

\vparagraph{Ablation Study.}
Table~\ref{tab:ablation} ablates the three components of LoopQ on Ouro 1.4B. \textit{(i) w/o \methodAabbr} removes loop-aware activation scaling, \textit{(ii) w/o \methodBabbr} removes selective loop-aware transformation, and \textit{(iii) w/o \methodCabbr} removes cross-loop state alignment. These variants isolate three sources of recurrent quantization error: loop-dependent range mismatch, residual loop-dependent geometry mismatch, and transition-state drift across repeated passes. Overall, the largest degradation comes from removing \methodAabbr: under W4A4, mean downstream accuracy drops by about $56.6\%$, and \texttt{OP} perplexity increases by more than five orders of magnitude. Removing \methodBabbr causes a smaller but still clear degradation, with about a $12.5\%$ drop in mean downstream accuracy and a $46.8\%$ increase in \texttt{OP} perplexity. Removing \methodCabbr has the mildest effect, with only about a $3.0\%$ drop in mean downstream accuracy and a $5.5\%$ increase in \texttt{OP} perplexity. Overall, \methodAabbr is the most critical component, while \methodBabbr and \methodCabbr provide complementary gains by correcting residual geometry mismatch and recurrent state drift.

\begin{figure}[t]
    \centering
    \begin{subfigure}[t]{0.45\textwidth}
        \centering
        \includegraphics[width=\linewidth]{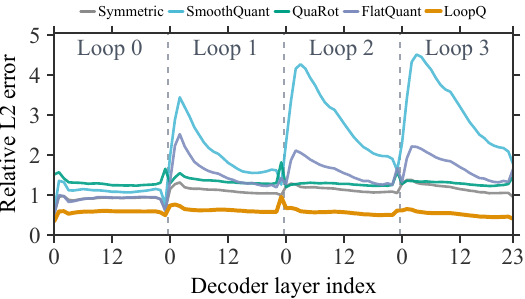}
        \caption{Ouro 1.4B}
        \label{fig:case-ouro}
    \end{subfigure}
    \hfill
    \begin{subfigure}[t]{0.525\textwidth}
        \centering
        \includegraphics[width=\linewidth]{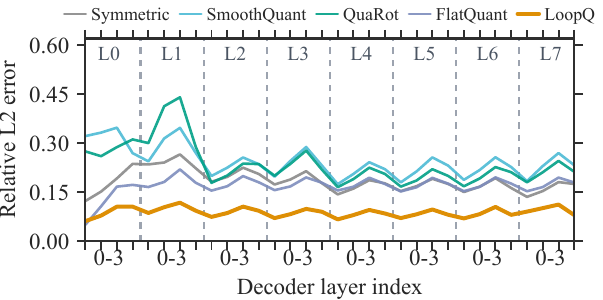}
        \caption{Parcae}
        \label{fig:case-loopformer}
    \end{subfigure}
    \caption{Quantization error across layers and loops on LAMBADA, measured by the relative $\ell_2$ norm between quantized and full-precision activations.}
    \label{fig:case}
\end{figure}

\vparagraph{Quantization robustness over trajectories.}
Table~\ref{tab:main_results} shows that quantized LoopLMs are highly sensitive to trajectory length, defined by the number of layers across loops. On LAMBADA, nearly all state-of-the-art baselines exceed $200$ perplexity under W4A4 \texttt{OP}, suggesting that quantization errors are recursively amplified along the loop trajectory rather than remaining local perturbations. To examine this effect, we compare layer-wise $\ell_2$ errors across loops on LAMBADA. As shown in Fig.~\ref{fig:case}, baseline errors quickly diverge and exhibit sharp spikes at loop transitions, where the final-layer output of one loop becomes the input to the next. Comparing Ouro 1.4B (a) with Parcae (b), we observe more severe error accumulation in the model with a longer loop trajectory. Other baselines saturate in the middle loops, suggesting that later hidden states become dominated by quantization noise. In contrast, LoopQ suppresses error accumulation and reduces transition spikes by correcting loop-dependent activation shifts and geometric mismatches. Further analysis in Appendix~\ref{apx:detailed_analysis} shows that these gains mainly come from targeted loop-aware adaptation and trajectory-aware calibration, rather than larger adaptation or calibration budgets.

\section{Related Work}
\label{sec:related}

\vparagraph{PTQ for LLMs.}
Post-training quantization (PTQ)~\cite{alizadeh2020gradient,chmiel2020robust,xiao2023robustmq,sun2025flatquant,liu2025spinquant,egashira2024exploiting,lin2024awq,frantar2022gptq} reduces the memory footprint and inference cost of LLMs without retraining. Early weight-only methods minimize rounding errors using approximate second-order information~\cite{frantar2022gptq} or preserve activation-salient channels for low-bit robustness~\cite{lin2024awq}, but they provide limited runtime memory reduction because activation memory remains dominant, especially in long-context inference. Joint weight-activation quantization further improves efficiency but makes activation outliers the central challenge, motivating explicit outlier-handling methods that retain extreme dimensions in higher precision~\cite{dettmers2022gpt3}, allocate bits by salience~\cite{huang2024slim}, or preserve high-variance subspaces~\cite{saxena2025resq}. Since such strategies may complicate deployment, another line of work reshapes activation distributions to ease quantization, including SmoothQuant~\cite{xiao2023smoothquant}, which redistributes quantization difficulty through channel-wise scaling; rotation-based methods such as QuaRot~\cite{ashkboos2024quarot} and SpinQuant~\cite{liu2025spinquant}, which redistribute channel energy; and geometry-aware methods such as OSTQuant~\cite{hu2025ostquant}, FlatQuant~\cite{sun2025flatquant}, and MixQuant~\cite{sanjeet2026mixquant}, which further improve activation geometry for quantization. However, these methods generally assume static quantization configurations, which are not well aligned with LoopLMs, where parameter sharing induces loop-dependent activation distributions and loop-dependent quantization behavior.

\vparagraph{Looped Language Models.}
Recent parameter-efficient architectures exploit inter-layer redundancy through layer merging~\cite{imfeld2023transformer,liu2024pruning,zhou2023modular,verma2025merging}, which compresses model depth while maintaining performance but still operates in a single forward pass. Looped language models extend this idea by explicitly reusing a shared stack of layers across loops, enabling multi-step computation and adaptive reasoning. Early works such as Universal Transformer~\cite{dehghani2018universal} and ALBERT~\cite{lan2019albert} demonstrate that parameter sharing improves efficiency with minimal performance degradation, while more recent approaches including Mixture of Recursions~\cite{bae2025mixture}, Ouro~\cite{zhu2025scaling}, HyperLoop~\cite{zeitoun2026hyperloop}, Parcae~\cite{prairie2026parcae} and LoopFormer~\cite{jeddi2026loopformer} explore recursive reasoning and dynamic-depth computation. As LLM inference increasingly becomes memory-bandwidth-bound rather than compute-bound~\cite{zhou2022transpim,laguna2022hardware,kwon2024lol}, looped architectures are also favorable for hardware--software co-design and memory-constrained deployment, since parameter sharing reduces memory traffic and alleviates bandwidth bottlenecks~\cite{gupta2026persistent,riera2022crew}. However, existing looped-model studies focus primarily on architecture design, parameter efficiency, and full-precision behavior, with limited discussion of low-bit PTQ. In particular, the quantization-specific effects induced by shared recursion, such as loop-dependent activation mismatch and error accumulation across loops, remain underexplored.
\section{Conclusion and Limitation}
\label{sec:conclusion}

In this paper, we present \textbf{LoopQ}, a loop-aware post-training quantization framework for looped Transformers. We show that LoopLMs face unique quantization challenges caused by loop-dependent distribution shifts, transition-state reuse, and recursive error accumulation. LoopQ addresses these challenges by preserving a shared quantized backbone while adding lightweight loop-dependent adaptations, including activation scaling, selective transformation, cross-loop state alignment, and trajectory-aware calibration. Experiments across multiple LoopLM backbones and benchmarks demonstrate that LoopQ consistently improves low-bit robustness over static PTQ baselines, especially under challenging W4A4 quantization. A current limitation of LoopQ is its focus on fixed-path decoding with a predetermined loop structure, while extensions to dynamic inference, long-context generation, and hardware-aware deployment remain open directions for future work.

\bibliographystyle{abbrv}
\bibliography{reference}

\clearpage

\appendix
\section{Detailed Proof}\label{apx:proof}

\subsection{Notation Table}
\label{app:notation}

\begin{table}[ht]
\centering
\small
\caption{Summary of notations.}
\label{tab:notation}
\begin{tabularx}{\linewidth}{@{}p{0.12\linewidth}p{0.12\linewidth}X@{}}
\toprule
\textbf{Category} & \textbf{Notation} & \textbf{Description} \\
\midrule

\textbf{LoopLM}
& $\mathbf{X}$ 
& Input token sequence. \\

& $T, L$ 
& Number of recursive loops and number of layers in the shared Transformer stack. \\

& $t, \ell$ 
& Loop index and layer index. \\

& $f_\ell(\cdot;\theta_\ell)$ 
& The $\ell$-th full-precision Transformer layer. \\

& $\mathbf{H}_{t,\ell}$ 
& Hidden state at layer $\ell$ in loop $t$. \\

& $\mathbf{H}_t$ 
& Input state of loop $t$, defined as $\mathbf{H}_t := \mathbf{H}_{t,0}$. \\

& $F$ 
& One full loop pass through the shared stack, defined as $F := f_L \circ \cdots \circ f_1$. \\

& $\mathbf{Z}$ 
& Full-precision output logits, $\mathbf{Z} = \mathrm{Proj}(\mathbf{H}_T)$. \\

\midrule
\textbf{PTQ}
& $\mathbf{W}_\ell$ 
& Weight matrix of layer $\ell$. \\

& $\mathbf{P}_\ell$ 
& Shared invertible transformation before quantization. \\

& $\hat{\mathbf{H}}_{t,\ell-1}$ 
& Transformed activation, defined as $\hat{\mathbf{H}}_{t,\ell-1} := \mathbf{H}_{t,\ell-1}\mathbf{P}_\ell$. \\

& $Q_w(\cdot), Q_a(\cdot)$ 
& Weight and activation quantization functions. \\

& $c_\ell$ 
& Shared activation quantization scale. \\

& $q_{\min}, q_{\max}$ 
& Minimum and maximum representable integer levels. \\

& $\widetilde{f}_\ell, \hat{\theta}_\ell$ 
& Quantized layer and its quantized parameters. \\

& $\widetilde{\mathbf{H}}_{t,\ell}, \widetilde{\mathbf{Z}}$ 
& Quantized hidden state and quantized output logits. \\

\midrule
\textbf{Error}
& $\varepsilon_t$ 
& Loop-level state error, defined as $\varepsilon_t := \|\widetilde{\mathbf{H}}_t-\mathbf{H}_t\|_F$. \\

& $\varepsilon_t^{\mathrm{quant}}$ 
& Quantization error introduced within loop $t$. \\

& $\gamma_t$ 
& Loop sensitivity factor of the full-precision loop function $F$. \\

\midrule
\textbf{LoopQ}
& $c_{t,\ell}$ 
& Loop-dependent activation scale used in Loop-aware Activation Scaling. \\

& $\mathbf{P}_{t,\ell}$ 
& Loop-dependent transformation used in selected layers. \\

& $S_\ell$ 
& Sharing-gap score for selecting loop-sensitive transformations. \\

& $A_t(\cdot)$ 
& Cross-loop transition adapter between loop $t$ and loop $t+1$. \\

& $\mathbf{a}_t, \mathbf{b}_t, \eta_t$ 
& Loop-dependent parameters in the transition adapter. \\

& $\mathbf{U}, \mathbf{V}$ 
& Shared low-rank matrices in the transition adapter. \\

\midrule
\textbf{Calibration}
& $\mathcal{D}$ 
& Calibration dataset. \\

& $\mathcal{L}$ 
& Trajectory-aware calibration loss. \\

& $\operatorname{KL}(\cdot\|\cdot)$ 
& Kullback--Leibler divergence for logit distillation. \\

& $\lambda, \alpha_t$ 
& Loss-balancing weight and loop-dependent calibration weight. \\

\bottomrule
\end{tabularx}
\end{table}

\subsection{Proof of Proposition~\ref{prop:inter_layer}}
\begin{customprop}{\ref{prop:inter_layer}}

\end{customprop}
\begin{proof}
We prove the claim by contradiction.

Let the transformed activations at layer $\ell$ be
\[
\hat{\mathbf{H}}_{t,\ell-1}
=
\mathbf{H}_{t,\ell-1}\mathbf{P}_\ell .
\]
The loop-wise activation quantization error is
\[
\varepsilon_{t,\ell}(\mathbf{P}_\ell,c_\ell)
=
\mathbb{E}\!\left[
\left\|
Q_a(\hat{\mathbf{H}}_{t,\ell-1};c_\ell)
-
\hat{\mathbf{H}}_{t,\ell-1}
\right\|_F^2
\right].
\]

We first consider scale drift. Assume that, for a fixed shared transformation
$\mathbf{P}_\ell$, the transformed activations satisfy
\[
\hat{\mathbf{H}}_{t,\ell-1} \sim s_t \mathbf{R}_t,
\]
where $s_t>0$ is a loop-dependent scale. Then
\[
\varepsilon_{t,\ell}(\mathbf{P}_\ell,c_\ell)
=
\mathbb{E}\!\left[
\left\|
Q_a(s_t\mathbf{R}_t;c_\ell)
-
s_t\mathbf{R}_t
\right\|_F^2
\right].
\]
For a symmetric clipping-based uniform quantizer, the quantization map is positively homogeneous:
\[
Q_a(s\mathbf{X};c)
=
s\,Q_a(\mathbf{X};c/s),
\qquad s>0.
\]
Therefore,
\[
\varepsilon_{t,\ell}(\mathbf{P}_\ell,c_\ell)
=
s_t^2
\mathbb{E}\!\left[
\left\|
Q_a(\mathbf{R}_t;c_\ell/s_t)
-
\mathbf{R}_t
\right\|_F^2
\right]
=
s_t^2 \varphi_t(\mathbf{P}_\ell,\alpha_t),
\]
where
\[
\alpha_t := c_\ell/s_t .
\]
Thus, for a fixed $\mathbf{P}_\ell$, optimizing $c_\ell$ for loop $t$ is equivalent to optimizing the normalized clipping ratio $\alpha_t$.

Assume that the normalized objective $\varphi_t(\mathbf{P}_\ell,\alpha)$ has a unique minimizer
$\alpha_t^\star(\mathbf{P}_\ell)$. Then the optimal clipping value for loop $t$ is
\[
c_{t,\ell}^\star(\mathbf{P}_\ell)
=
s_t \alpha_t^\star(\mathbf{P}_\ell).
\]
If a shared clipping value $c_\ell$ were simultaneously optimal for two loops
$t\neq t'$, then it must satisfy
\[
c_\ell
=
s_t\alpha_t^\star(\mathbf{P}_\ell)
=
s_{t'}\alpha_{t'}^\star(\mathbf{P}_\ell).
\]
In particular, when the normalized quantization objectives share the same optimal ratio,
\[
\alpha_t^\star(\mathbf{P}_\ell)
=
\alpha_{t'}^\star(\mathbf{P}_\ell),
\]
the above equality implies $s_t=s_{t'}$, contradicting the scale-drift condition
$s_t\neq s_{t'}$. Hence, under scale drift, a shared clipping parameter cannot be simultaneously optimal for both loops.

We next consider directional mismatch. For this part, we analyze the alignment component of
$\mathbf{P}_\ell$ and assume it is constrained to be orthogonal. This captures the common case where the transformation aligns activation directions without changing Euclidean norms. Since
\[
\hat{\mathbf{H}}_{t,\ell-1} \sim s_t\mathbf{R}_t,
\qquad
\mathrm{Cov}(\mathbf{R}_t)=\mathbf{\Sigma}_t,
\]
the second-order geometry observed by a coordinate-wise quantizer is determined by the covariance of the transformed activations. An optimal orthogonal transformation should align the principal activation directions with the quantization axes. A necessary condition is therefore
\[
\mathbf{P}_\ell^\top \mathbf{\Sigma}_t \mathbf{P}_\ell
\quad \text{is diagonal}.
\]
If the same $\mathbf{P}_\ell$ were also optimal for another loop $t'$, then we would also require
\[
\mathbf{P}_\ell^\top \mathbf{\Sigma}_{t'} \mathbf{P}_\ell
\quad \text{is diagonal}.
\]
Thus, a shared optimal transformation would require
$\mathbf{\Sigma}_t$ and $\mathbf{\Sigma}_{t'}$ to be simultaneously orthogonally diagonalizable.

For symmetric covariance matrices, simultaneous orthogonal diagonalizability is equivalent to commutativity. Therefore, if
\[
\mathbf{\Sigma}_t\mathbf{\Sigma}_{t'}
\neq
\mathbf{\Sigma}_{t'}\mathbf{\Sigma}_t,
\]
then no shared orthogonal transformation $\mathbf{P}_\ell$ can diagonalize both covariance matrices. This contradicts the assumption that a single shared transformation can be simultaneously optimal for both loops.

Combining the two cases, scale drift makes the optimal clipping parameter loop-dependent, while covariance drift makes the optimal alignment transformation loop-dependent. Hence, under either source of drift, there exists no shared configuration
$(\mathbf{P}_\ell,c_\ell)$ that simultaneously minimizes
$\varepsilon_{t,\ell}$ and $\varepsilon_{t',\ell}$.

Finally, let
$(\mathbf{P}_{t,\ell}^\star,c_{t,\ell}^\star)$ and
$(\mathbf{P}_{t',\ell}^\star,c_{t',\ell}^\star)$
denote loop-wise minimizers. Since no shared pair can coincide with both loop-wise optima, any shared calibration
$(\mathbf{P}_\ell,c_\ell)$ must satisfy
\[
\varepsilon_{t,\ell}(\mathbf{P}_\ell,c_\ell)
>
\varepsilon_{t,\ell}(\mathbf{P}_{t,\ell}^\star,c_{t,\ell}^\star)
\]
or
\[
\varepsilon_{t',\ell}(\mathbf{P}_\ell,c_\ell)
>
\varepsilon_{t',\ell}(\mathbf{P}_{t',\ell}^\star,c_{t',\ell}^\star).
\]
Therefore, shared calibration incurs strictly positive excess error on at least one loop. The proposition follows.
\end{proof}

\subsection{Proof of Proposition~\ref{prop:inter_loop}}

\begin{customprop}{\ref{prop:inter_loop}}

\end{customprop}
\begin{proof}
By definition,
\[
\varepsilon_{t+1}
=
\|\widetilde{\mathbf{H}}_{t+1}-\mathbf{H}_{t+1}\|_F
=
\|\widetilde{F}(\widetilde{\mathbf{H}}_t)-F(\mathbf{H}_t)\|_F .
\]
Applying the triangle inequality gives
\[
\varepsilon_{t+1}
\le
\|\widetilde{F}(\widetilde{\mathbf{H}}_t)-F(\widetilde{\mathbf{H}}_t)\|_F
+
\|F(\widetilde{\mathbf{H}}_t)-F(\mathbf{H}_t)\|_F .
\]
By the definition of the loop-wise quantization error,
\[
\|\widetilde{F}(\widetilde{\mathbf{H}}_t)-F(\widetilde{\mathbf{H}}_t)\|_F
=
\varepsilon_t^{\mathrm{quant}} .
\]
Since $F$ is $\gamma_t$-Lipschitz on the segment between
$\widetilde{\mathbf{H}}_t$ and $\mathbf{H}_t$, we have
\[
\|F(\widetilde{\mathbf{H}}_t)-F(\mathbf{H}_t)\|_F
\le
\gamma_t
\|\widetilde{\mathbf{H}}_t-\mathbf{H}_t\|_F
=
\gamma_t \varepsilon_t .
\]
Combining the two bounds yields
\[
\varepsilon_{t+1}
\le
\varepsilon_t^{\mathrm{quant}}+\gamma_t\varepsilon_t .
\]

Unrolling this recursion from $0$ to $T-1$ gives
\[
\varepsilon_T
\le
\left(\prod_{t=0}^{T-1}\gamma_t\right)\varepsilon_0
+
\sum_{\tau=0}^{T-1}
\left(\prod_{t=\tau+1}^{T-1}\gamma_t\right)
\varepsilon_\tau^{\mathrm{quant}},
\]
where the empty product is defined as $1$. Since the full-precision and quantized recurrences start from the same initial state, $\widetilde{\mathbf{H}}_0=\mathbf{H}_0$, we have $\varepsilon_0=0$. Therefore,
\[
\varepsilon_T
\le
\sum_{\tau=0}^{T-1}
\left(\prod_{t=\tau+1}^{T-1}\gamma_t\right)
\varepsilon_\tau^{\mathrm{quant}} .
\]
Thus, the quantization error introduced at loop $\tau$ is propagated through all downstream sensitivity factors $\gamma_{\tau+1},\ldots,\gamma_{T-1}$. Earlier errors pass through a longer chain of loop transformations and can therefore have a larger effect on the final state. The proposition follows.
\end{proof}

\section{Detailed Experimental Setup}
\label{apx:loopwise_details}

\subsection{Loop-dependent Module Selection.}
\begin{table}[t]
    \centering
    \small
    \caption{Transform-weight groups used as the minimum units for dynamic loop-dependent selection.}
    \label{tab:loopwise_block_expansion}
    \begin{tabular}{c p{0.23\linewidth}p{0.32\linewidth}p{0.20\linewidth}}
        \toprule
        Model & Candidate group & Weight side & Transform side \\
        \midrule
        \multirow{4}{*}{{Ouro}} & \texttt{self\_attn.qkv\_proj} & \texttt{q\_proj}, \texttt{k\_proj}, \texttt{v\_proj} & \texttt{self\_attn.ln\_trans} \\
        & \texttt{self\_attn.o\_proj} & \texttt{o\_proj} & \texttt{self\_attn.o\_trans} \\
        & \texttt{mlp.up\_gate} & \texttt{gate\_proj}, \texttt{up\_proj} & \texttt{mlp.up\_gate\_trans} \\
        & \texttt{mlp.down\_proj} & \texttt{down\_proj} & \texttt{mlp.down\_trans} \\
        \midrule
        \multirow{4}{*}{{LoopFormer}} & \texttt{attn.qkv} & \texttt{attn.c\_attn} & \texttt{attn.in\_trans} \\
        & \texttt{attn.out} & \texttt{attn.c\_proj} & \texttt{attn.out\_trans} \\
        & \texttt{mlp.up} & \texttt{mlp.c\_fc} & \texttt{mlp.up\_trans} \\
        & \texttt{mlp.down} & \texttt{mlp.c\_proj} & \texttt{mlp.down\_trans} \\
        \midrule
        \multirow{4}{*}{{Parcae}} & \texttt{attn.qkv} & \texttt{attn.c\_q}, \texttt{attn.c\_k}, \texttt{attn.c\_v} & \texttt{attn.in\_trans} \\
        & \texttt{attn.out} & \texttt{attn.c\_proj} & \texttt{attn.out\_trans} \\
        & \texttt{mlp.up} & \texttt{mlp.fc} & \texttt{mlp.up\_trans} \\
        & \texttt{mlp.down} & \texttt{mlp.proj} & \texttt{mlp.down\_trans} \\
        \bottomrule
    \end{tabular}
\end{table}

To preserve the coupling between each transformation and the linear weights that consume its transformed activations, selection is performed at the level of transform--weight groups. Each group is the minimum selectable unit in our implementation and consists of a transformation module together with the quantized linear weights applied to its output activations. The candidate set is constructed directly in this grouped form, and the scores of all linear layers within each group are aggregated to estimate group-level importance. Unselected groups retain the shared transform and shared quantized weights, while selected groups use loop-dependent versions of both components. The detailed grouping strategy is shown in Table~\ref{tab:loopwise_block_expansion}.

\subsection{Deployment Overhead} 
\label{apx:parameter_usage}
We summarize both the parameter overhead and practical loaded memory footprint of LoopQ in Table~\ref{tab:overhead_memory}.

\paragraph{Parameter Overhead.}  Most additional parameters introduced by LoopQ come from the Kronecker-decomposed transformation matrices used in the FlatQuant-style transformation parameterization, making the shared quantization parameters comparable in scale to those introduced by FlatQuant. In contrast, the truly loop-dependent components, including dynamic activation scales and selected loop-dependent transforms, account for only a negligible fraction of the total model parameters, typically at the level of a few hundred thousand parameters and less than $0.12\%$ of the deployed model size. 

\paragraph{Memory Footprint.} In terms of memory, the reported footprint reflects the actual model state after loading, including packed 4-bit weights, quantization metadata, non-quantized components, and lightweight dynamic loop-dependent parameters. Across all evaluated backbones, LoopQ W4A4 reduces memory relative to the BF16 baseline, achieving reductions of $57.1\%$, $61.7\%$, $27.3\%$, and $28.3\%$ on Ouro 1.4B, Ouro 2.6B, LoopFormer 3$\times$8, and Parcae 370M, respectively. These results indicate that LoopQ preserves the parameter-sharing and deployment advantages of looped models while adding only minimal loop-aware adaptation overhead.
\begin{table}[t]
\centering
\small
\caption{Parameter overhead and memory footprint summary. \textbf{Shared} denotes quantization parameters shared across all loops, while \textbf{Loop-Dep.} denotes additional loop-dependent parameters used for dynamic loop-wise adaptation during deployment. \textbf{Memory footprint} is reported in GB.}
\label{tab:overhead_memory}
\resizebox{\linewidth}{!}{%
\begin{tabular}{lcccccccc}
\toprule
\multirow{2}{*}{\textbf{Model}} &
\multirow{2}{*}{\textbf{Total Params}} &
\multicolumn{2}{c}{\textbf{Shared}} &
\multicolumn{2}{c}{\textbf{Loop-Dep.}} &
\multicolumn{3}{c}{\textbf{Memory Footprint}} \\
\cmidrule(lr){3-4} \cmidrule(lr){5-6} \cmidrule(lr){7-9}
&
&
\textbf{Params} &
\textbf{Ratio} &
\textbf{Params} &
\textbf{Ratio} &
\textbf{BF16} &
\textbf{LoopQ W4A4} &
\textbf{Rel.} \\
\midrule
Ouro 1.4B             & 1.476B  & 40.65M & 2.75\%   & 649.5K & 0.044\%  & 2.672  & 1.147  & $-57.1\%$ \\
Ouro 2.6B             & 2.750B  & 82.16M & 2.99\%   & 293.6K & 0.0107\% & 4.969  & 1.902  & $-61.7\%$ \\
LoopFormer 3$\times$8 & 282.19M & 3.69M  & 1.31\%   & 328.0K & 0.116\%  & 0.5240 & 0.3808 & $-27.3\%$ \\
Parcae 370M           & 390.27M & 1.70M  & 0.44\%   & 304.8K & 0.078\%  & 0.7237 & 0.5190 & $-28.3\%$ \\
\bottomrule
\end{tabular}%
}
\end{table}

\subsection{Implementation Details}
\label{apx:implementation_details}

\begin{table}[t]
\small
\centering
\caption{Hyperparameters used in experiments.}
\label{tab:loopq_hyperparams}
\begin{tabular}{ll}
\toprule
Hyperparameter & Value \\
\midrule
Activation Granularity & Group-wise \\
Group size & $32$ \\
Calibration samples & $1024$ \\
Maximum sequence length & $256$ \\
\methodAabbr activation scales & Per module, per loop \\
\methodBabbr selection budget for Ouro 1.4B & $4$ \\
\methodBabbr selection budget for Ouro 2.6B & $8$ \\
\methodBabbr selection budget for LoopFormer & $1$ \\
\methodBabbr selection budget for Parcae & $1$ \\
\methodCabbr adapter rank & $8$ \\
$\lambda$ & $0.1$ \\
KL temperature & $1$ \\
Teacher top-$k$ logits & $1000$ \\
Gradient clipping & $1.0$ \\
\bottomrule
\end{tabular}
\end{table}
\paragraph{Environments.} All experiment are conducted on a workstation equipped with an NVIDIA GeForce RTX 5090 GPU with 32GB of memory and an AMD Ryzen 9 9950X  CPU. The software stack uses PyTorch 2.9.0 with CUDA 12.8. The GPU hours required for a single calibration run vary by model size but do not exceed 5 hours. Full \texttt{LM-evaluation} runs are performed from the exported artifacts on the same machine, using BF16 model loading together with the quantized runtime kernels. 
\paragraph{Quantization Implementation.} All experiments use symmetric round-to-nearest post-training quantization (PTQ), with the base models loaded in BF16. Unless otherwise specified, we keep the LM head, token embeddings, position embeddings, and architecture-specific time or modulation modules unquantized. The calibration set consists of 1,024 samples drawn from the validation split of \texttt{mit-han-lab/pile-val-backup}, using the \texttt{text} column. Detailed hyperparameter settings are presented in Table ~\ref{tab:loopq_hyperparams}.

\subsection{Adaptive weighting for the final teacher target}
\label{apx:adaptive_weighting}
Recall that the trajectory-aware distillation loss in Eq.~(\ref{eq:loopq_unified_loss}) uses hidden-state supervision to align the quantized recurrent trajectory with the full-precision teacher trajectory. In particular, the loop-wise target $\mathbf{H}_{t,L}$ preserves the full-precision state at the same loop, while the final target $\mathbf{H}_{T}$ provides a convergence signal for the quantized state. However, using a fixed relative strength for these two targets can be suboptimal. When the quantized trajectory closely follows the teacher trajectory, the final state $\mathbf{H}_{T}$ gives useful fixed-point-style guidance. In contrast, when quantization errors introduce a large trajectory mismatch, forcing $\widetilde{\mathbf{H}}_{t,L}$ too strongly toward $\mathbf{H}_{T}$ may over-constrain the quantized recurrence. To balance these effects, we introduce an adaptive weight $\mu_t$ to interpolate between the loop-wise teacher target and the final teacher target, and use the following objective in practice:
\[
\resizebox{\linewidth}{!}{$
\displaystyle
\begin{aligned}
\mathcal{L}
&= \mathbb{E}_{x\sim\mathcal{D}}\Biggl[
\operatorname{KL}\!\left(\mathbf{Z}\,\middle\|\,\widetilde{\mathbf{Z}}\right)
+ \lambda\Bigl(
\sum_{t=0}^{T-1}\Bigl((1-\mu_t)\left\|\widetilde{\mathbf{H}}_{t,L}-\mathbf{H}_{t,L}\right\|_2^2
+ \mu_t\left\|\widetilde{\mathbf{H}}_{t,L}-\mathbf{H}_{T}\right\|_2^2\Bigr)
+ \sum_{t=0}^{T-2}\left\|A_t(\widetilde{\mathbf{H}}_{t,L})-\mathbf{H}_{t+1,0}\right\|_2^2
\Bigr)\Biggr],
\end{aligned}
$}
\]
where
\[
\displaystyle
\mu_t =
\frac{
\left\|\mathbf{H}_{T}-\mathbf{H}_{t,L}\right\|_2^2
}{
\left\|\mathbf{H}_{T}-\mathbf{H}_{t,L}\right\|_2^2
+\sum_{t'=t}^{T-1}\left\|\mathbf{H}_{t',L}-\widetilde{\mathbf{H}}_{t',L}\right\|_2^2+\epsilon
}.
\]
The numerator measures the remaining distance between the full-precision state at loop $t$ and the final teacher state. When this distance is large, the final state still provides a meaningful convergence direction, so the objective assigns more weight to the final-target supervision. The summation term in the denominator measures the accumulated mismatch between the full-precision and quantized trajectories from loop $t$ onward. When this mismatch becomes large, $\mu_t$ decreases, shifting the supervision back toward the loop-wise teacher state and avoiding overly aggressive correction toward $\mathbf{H}_{T}$.

Overall, $\mu_t$ acts as an adaptive trust weight for final-state guidance. It strengthens the fixed-point-style signal from $\mathbf{H}_{T}$ when the quantized trajectory remains reliable, and weakens this signal when quantization-induced trajectory mismatch becomes dominant. In practice, $\mu_t$ is updated periodically, e.g., every 100 calibration steps, to reduce additional computation and avoid noisy step-wise fluctuations.

\section{Detailed Analysis of LoopQ}
\label{apx:detailed_analysis}
\subsection{Loop-dependent Linear Layer Selection}
\label{apx:selection}
Figure~\ref{fig:loopq_analysis} shows that dynamic loop-dependent selection concentrates the adaptation budget on a small set of critical modules rather than distributing it uniformly. The first scan selects the first QKV group, which is consistent with LoopQ's motivation: QKV is the input side of attention and directly projects the current-loop hidden state into queries, keys, and values. In looped models, this module is reused under different roles across loops, from embedding-level representations in early passes to refined hidden states in later passes. Thus, activation-scale and geometry mismatch first become exposed at QKV, where a single shared transform may fail to serve multiple loop-dependent regimes.

Subsequent selections of early MLP down and up/gate modules further suggest that LoopQ progressively compensates for residual mismatch propagated through the FFN and residual write-back paths. Overall, this supports the design of preserving a shared quantized backbone while introducing minimal loop-dependent adaptations only for modules that are less suitable for strict sharing.

The heatmap also shows that loop-dependent sensitivity is strongest near the head and tail of the model, while the middle recurrent region is less sensitive. This pattern is consistent with our hypothesis that quantization error in LoopLMs is dominated by role transitions and recurrent interfaces, allowing most modules to remain shared without loop-dependent adaptation.

\begin{figure}[t]
    \centering
    \includegraphics[width=\textwidth]{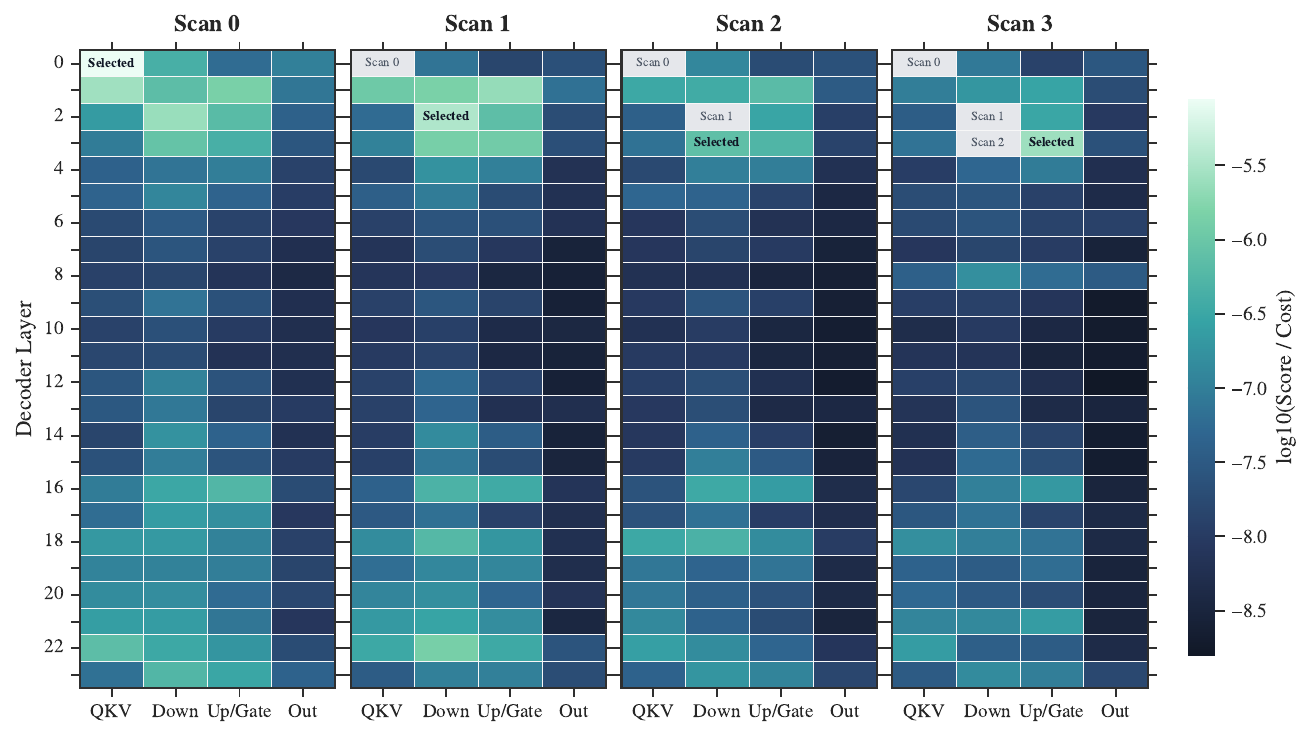}
    \caption{Loop-dependent selection heatmap on Ouro 1.4B.}
    \label{fig:loopq_analysis}
\end{figure}

\begin{figure}[t]
    \centering
    \includegraphics[width=0.55\linewidth]{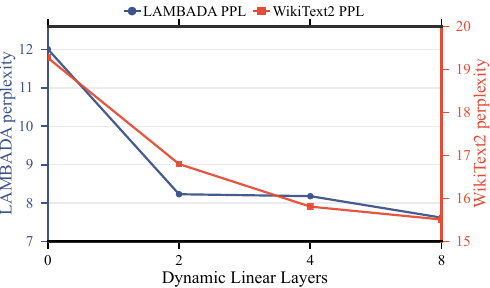}
    \caption{Perplexity under different numbers of selected loop-dependent linear layers.}
    \label{fig:dynamic_layers}
\end{figure}

Figure~\ref{fig:dynamic_layers} shows how the number of selected loop-dependent linear layers affects model perplexity. As the number of adapted layers increases from 0 to 2, 4, and 8, perplexity on both LAMBADA and WikiText-2 consistently decreases, demonstrating the benefit of additional loop-dependent flexibility. These adapted layers better capture activation geometry and quantization-error patterns under different loop roles. However, most of the improvement comes from the first few selected layers, with only marginal gains when increasing the budget to 8 layers. This indicates that a small set of high sharing-gap modules accounts for the dominant mismatch. Adding more loop-dependent layers can further improve flexibility, but also introduces additional parameters and deployment overhead, creating a trade-off between accuracy recovery and the simplicity of a shared quantized backbone.

\subsection{Stability under Loop Extrapolation}
Recent studies on looped architectures view recursion as an orthogonal scaling axis that increases inference FLOPs without increasing parameter count~\cite{fan2024looped,zhu2025scaling,jeddi2026loopformer,prairie2026parcae}. Thus, stability across loop budgets tests whether quantization preserves the model's original recursive dynamics. Figures~\ref{fig:parcae_lambada_out} and~\ref{fig:parcae_wikitext_out} show that, under W4A4 quantization, LoopQ substantially outperforms static baselines on both LAMBADA and WikiText-2. Static baselines maintain PPL values several times larger than the full-precision reference from 8 to 16 loops, whereas LoopQ closely tracks the FP curve and preserves a similar test-time scaling pattern: performance first improves as the loop budget increases and then enters a plateau.

Notably, LoopQ is calibrated only within an 8-loop horizon. When extrapolated to 9--16 loops by extending the trajectory of loop-dependent activation scales, the quantized model does not exhibit mismatch-induced PPL explosion or performance collapse. This suggests that LoopQ not only recovers quantization accuracy at a fixed inference depth but also preserves Parcae's ability to realize test-time scaling through stable recurrent states. As a result, quantized Parcae remains stable and usable when using intermediate loop outputs or extrapolating to larger loop budgets.

\begin{figure}[t]
    \centering
    \begin{subfigure}[t]{0.49\textwidth}
        \centering
        \includegraphics[width=\linewidth]{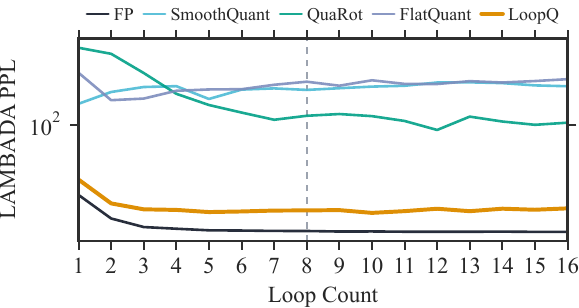}
        \caption{LAMBADA}
        \label{fig:parcae_lambada_out}
    \end{subfigure}
    \hfill
    \begin{subfigure}[t]{0.49\textwidth}
        \centering
        \includegraphics[width=\linewidth]{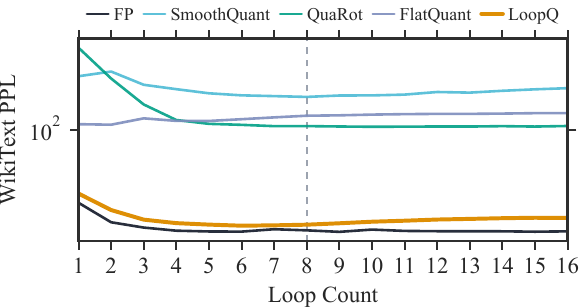}
        \caption{WikiText-2}
        \label{fig:parcae_wikitext_out}
    \end{subfigure}
    \caption{Perplexity under different loop depths.}
    \label{fig:case_depth}
\end{figure}

\begin{figure}[t]
    \centering
    \includegraphics[width=0.55\linewidth]{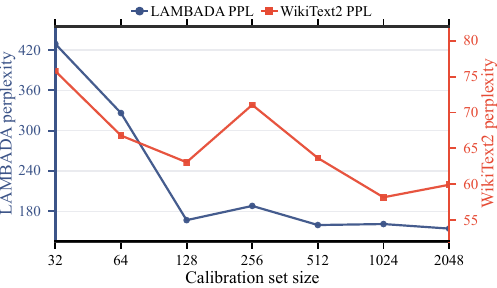}
    \caption{Perplexity under different calibration set sizes.}
    \label{fig:calibration_size}
\end{figure}

\subsection{Calibration-Size Sensitivity Analysis}
Figure~\ref{fig:calibration_size} analyzes the effect of calibration set size. When the number of calibration samples increases from 32 to 128, perplexity on both LAMBADA and WikiText-2 drops substantially, showing that a small calibration set is insufficient to cover distributional variation along the loop trajectory and may lead to unstable quantization and distillation estimates. Beyond 128 samples, however, performance largely saturates with only minor fluctuations. This indicates that LoopQ is not highly sensitive to calibration size once the calibration data covers the major activation distributions, loop-role shifts, and transition states. Further increasing the data size does not provide consistent additional gains, suggesting that LoopQ's robustness mainly comes from loop-aware selection and trajectory-aware calibration rather than simply using more calibration samples.



\end{document}